\theoremstyle{definition}
\newtheorem{definition}{Definition}[]
\newtheorem{rem}{Remark}[]
\newtheorem{prop}{Proposition}
\begin{document}
%
\title{Skeletonization Quality Evaluation: Geometric\\Metrics for Point Cloud Analysis in Robotics}
%
%
%
%

\author{Qingmeng Wen,~\IEEEmembership{Student Member,~IEEE}, Yu-Kun Lai, ~\IEEEmembership{Senior Member,~IEEE}, Ze Ji,~\IEEEmembership{Member,~IEEE}, \\ Seyed Amir Tafrishi,~\IEEEmembership{Member,~IEEE}
\IEEEcompsocitemizethanks{\IEEEcompsocthanksitem Qingmeng Wen and Seyed Amir Tafrishi are with Geometric Mechanics and Mechatronics in Robotics (gm$^2$R) Lab, School of Engineering, Cardiff University, Queen's Buildings, The Parade, Cardiff, CF24 3AA, UK \protect\\
E-mail:  \{wen1, tafrishisa\}@cardiff.ac.uk
\IEEEcompsocthanksitem Ze Ji is with the School of Engineering, Cardiff University, CF24 3AA, UK \protect\\
E-mail:  jiz1@cardiff.ac.uk
\IEEEcompsocthanksitem Yu-Kun Lai is with the School of Computer Science and Informatics, Cardiff University, CF24 4AG, UK.\protect\\
E-mail: laiy4@cardiff.ac.uk}
\thanks{Manuscript received April 19, 2005; revised August 26, 2015.}}

%
%

\markboth{Journal of \LaTeX\ Class Files,~Vol.~14, No.~8, August~2015}%
{Shell \MakeLowercase{\textit{et al.}}: Bare Demo of IEEEtran.cls for Computer Society Journals}
%



\IEEEtitleabstractindextext{%
\begin{abstract}
Skeletonization is a powerful tool for shape analysis, rooted in the inherent instinct to understand an object's morphology. It has found applications across various domains, including robotics. Although skeletonization algorithms have been studied in recent years, their performance is rarely quantified with detailed numerical evaluations. This work focuses on defining and quantifying geometric properties to systematically score the skeletonization results of point cloud shapes across multiple aspects, including topological similarity, boundedness, centeredness, and smoothness. We introduce these representative metric definitions along with a numerical scoring framework to analyze skeletonization outcomes concerning point cloud data for different scenarios, from object manipulation to mobile robot navigation. Additionally, we provide an open-source tool to enable the research community to evaluate and refine their skeleton models. Finally, we assess the performance and sensitivity of the proposed geometric evaluation methods from various robotic applications.
\end{abstract}

\begin{IEEEkeywords}
skeletonization, point cloud, geometry, robotics application, recognition
\end{IEEEkeywords}}

\maketitle

\IEEEdisplaynontitleabstractindextext

%
\IEEEpeerreviewmaketitle

\section{Introduction}
\IEEEPARstart{S}{keletonization} is an abstract shape representation that captures the intuitive topological structure of a shape. Studies by Ayzenberg et al.~\cite{ayzenberg_skeletal_2019} suggest that skeletal descriptions align with human intuition for distinguishing different shapes. Driven by its promising applications, skeletal representation has been widely explored in both robotics and computer vision. However, despite numerous methods for extracting curve/surface skeletons from surfaces or point clouds, quantitative evaluations of skeletonization remain open problems~\cite{tagliasacchi20163d,saha2016survey,meyer2023cherrypicker}. In other words, determining an optimal reference skeleton model for highly complex shapes is challenging, especially beyond easily distinguishable biological forms that naturally contain skeletal structures.

One typical skeletal representation is the surface skeleton. It is also mentioned as the medial axis surface or medial axis of a 3D shape~\cite{sobiecki2014comparison}. The notion of medial axis transform is first mentioned by Blum~\cite{blum_transformation_1967} to describe a shape. For a 3D shape, the definition given by Amenta et al.~\cite{amenta2001power} suggests that the medial axis of a shape is the center collection of medial balls, where a medial ball is a maximal inscribed ball of the shape boundary. The definition of the medial axis is simple and can be applied to many fields, such as shape manipulation and surface reconstruction. However, it is argued that the medial axis is both computationally expensive and is not robust to noises yet~\cite{cornea_curve-skeleton_2007,lin_point2skeleton_2021}. Besides, most studies of the medial axis computation are designed to work on watertight input~\cite{lin_point2skeleton_2021}, while in reality, especially in the robotic scenario, the most accessible data is a noisy point cloud.

Another common skeletal representation is the curve skeleton, which has a 1D structure and is more abstract than the surface skeleton~\cite{qin_mass-driven_2020}. In comparison to the medial axis, the curve skeleton is more compact and easier to manipulate. Consequently, the curve skeleton has garnered greater interest for applications than the medial axis~\cite{cao_point_2010}. Furthermore, numerous curve skeletonization methods exist for both watertight surfaces and noisy point cloud inputs. However, a universally accepted definition of the curve skeleton remains absent~\cite{cornea_curve-skeleton_2007}. For instance, Dey and Sun~\cite{dey2006defining} define the curve skeleton as a subset of the medial surface, whereas Cornea et al.~\cite{cornea_curve-skeleton_2007} argue that this definition is overly restrictive. Another perspective considers the curve skeleton as a subset derived from Laplacian contraction applied to either a mesh or point cloud input~\cite{tagliasacchi_curve_2009,cao_point_2010,GLSkeleton2024}. However, due to its highly abstract nature, a curve skeleton may alter or omit critical geometric properties of the original object, necessitating a careful evaluation to ensure that it faithfully represents the point cloud shape.  

Skeletonization methods encode valuable topological information, making them applicable across various fields, including robotics. Cornea et al.~\cite{cornea_curve-skeleton_2007} first explored the applications of curve skeletons, emphasizing their role in virtual navigation, where the centeredness property helps avoid collisions. In robotics, topology-preserving skeletons facilitate navigation by constructing free-space bubble graphs~\cite{noel2023skeleton}. Recent studies further demonstrate their potential in cave exploration, where the environment is typically dark and uncertain~\cite{tabib2021autonomous,yang2024cross}. Curve skeletons also aid dense surface reconstruction from SLAM data~\cite{wu2020skeleton} and play a crucial role in robotic grasping. Their topological structure enables grasping of objects with holes, as shown by Pokorny et al. and Stork et al.~\cite{pokorny2013grasping, stork2013topology}, while Varava et al.~\cite{varava2016caging} leveraged skeletons to detect key features such as ``forks'' and ``necks'' for caging grasps. Both medial axis transformations and curve skeletons have been used in grasp planning~\cite{Przybylski2011, przybylski2012skeleton, vahrenkamp2018planning}. Additionally, skeletonization shows potential for applications in complex rolling contact systems~\cite{tafrishi2025survey}. Besides, skeleton information has proven beneficial in agricultural robotics, particularly in agricultural manipulation~\cite{you2022semantics, schneider20233d, kim2023occlusion}. Despite these advancements, the absence of a well-defined criterion for desirable skeleton structures in robotics remains a significant challenge.  
In previous studies, skeletonization has predominantly relied on visual judgment, with quantitative assessment remaining uncommon~\cite{tagliasacchi20163d}. The first justifiable definition of desirable properties for curve skeletons was introduced by Cornea et al.~\cite{cornea_curve-skeleton_2007}. Building on this definition, the performance of multiple existing skeletonization methods has been analyzed~\cite{sobiecki2013qualitative}, followed by a more extensive study of both surface and curve skeleton results~\cite{sobiecki2014comparison}. However, their discussion on skeleton quality still relies on subjective visual assessment. A comprehensive survey by Tagliasacchi et al.~\cite{tagliasacchi20163d} and Saha et al.~\cite{saha2016survey} reiterated the similar definition of desired properties and highlighted the need for a quantitative analysis of skeletonization methods. Recently, Laplacian-based skeletonization results have been evaluated based on surface normal vectors and curvature changes during the contraction process~\cite{wen2024criteria}. You et al. proposed an objective function to assess curve skeletonization by considering edge length, straightness, and orientation~\cite{you2022semantics}. Additionally, some researchers have compared their skeletonization results against known ground truth skeletons of input data~\cite{meyer2023cherrypicker,dobbs2024quantifying}. While existing studies provide insights into comparing skeletonization methods, quantitative and generalizable geometric metrics remain scarce, with current evaluations still heavily dependent on subjective visual judgment. Consequently, the quantitative comparison of skeletonization methods remains an open challenge due to the lack of a unified skeleton definition and the inherent difficulty in measuring skeleton properties.  

Inspired by the motivations mentioned earlier, we develop numeric and representative metrics to assess the quality of curve and surface skeletonization for point cloud shapes through explicit geometric definitions of skeleton properties. Thus, the contributions of this paper are as follows:  
\begin{itemize}
    \item We introduce formal metric and qualitative geometric evaluation definitions—topological similarity, boundedness, centeredness, and smoothness—to assess skeletonization in Section \ref{metricevaluation}.  
    \item We develop numerical and representative evaluation metrics for skeletonization, enabling comprehensive analysis of point cloud surfaces and their skeletons, with background discussion for each metric where applicable.
    \item We analyze the performance and sensitivity of the proposed evaluation properties in skeletonization, demonstrating their effectiveness using, for example, a Laplacian-based curve skeletonization method on a real-scanned dataset in Section \ref{resultsandiscussion}.  
    \item We evaluate various point cloud datasets for applications ranging from object manipulation to navigation and provide an open-access repository\footnote{\url{https://github.com/weiqimeng1/PointCloud_Skeletonization_Metrics}} for the research and development community.  
\end{itemize}



\section{skeletonization Evaluation Metrics}
\label{metricevaluation}
\subsection{Topological Similarity}
This section discusses a quantitative analysis of topology preservation in the skeletonization process. While the distance between point cloud data can be measured using various metrics, such as Hausdorff or Chamfer distance, separating pure shape metrics from distance metrics is challenging. Here, we focus on topological similarity as a key property for skeletonization results. Two topologically similar shapes must have comparable topological structures. For evaluating point cloud skeletonization in terms of topological similarity, it is necessary to design a method that compares two topological shapes represented by point sets and provides a score indicating the extent of their topological similarity.

A key concept in topology related to shape similarity is \textit{homotopy}, which refers to a deformation process that results in shapes topologically equivalent to the original ones. The deformed shapes and their corresponding originals are called homotopic or homotopy equivalent. Generally, two topologically equivalent objects must have the same number of connected components, such as cycles, holes, tunnels, cavities, and other higher-dimensional features \cite{lieutier2003any}. These properties are known as ``topological invariants''. For example, a sphere and an ellipsoid are topologically equivalent since they both have one connected component and one cavity. However, according to existing research on point cloud skeletonization \cite{cao_point_2010, Huang2013L1, lin_point2skeleton_2021}, computing a skeletal representation that meets all these criteria is neither overly complex nor always necessary. Thus, Cornea et al.~\cite{cornea_curve-skeleton_2007} proposed a more relaxed definition for curve skeletons: the topology of the original object $ O $ is preserved in a relaxed sense if the curve skeleton $ S $ includes at least one loop for each tunnel and cavity in $ O $ and retains the same number of connected components.

Although the relaxed definition of curve-skeleton homotopy equivalence accommodates skeleton properties, it remains challenging for quantitative analysis of point cloud skeletonization. Comparing topological features, such as connected components, between different shape descriptors like curve-skeletons and point cloud data is difficult. Point sets are typically unordered, unconnected, and unevenly distributed, while 1D curve skeletons consist of well-ordered, explicitly connected vertices with their distribution controlled by sampling. As a result, existing research on skeletonization comparison still heavily relies on subjective visual judgment~\cite{sobiecki2014comparison}.

While it is challenging to compare topological features directly between the curve skeletons and the original point cloud, for those methods generating contracted or shrunk point sets as an intermediate product of curve skeletons (skeletal points), it would be easier to analyze the topological change between the skeletal point sets and the original point cloud since both are the same type of shape descriptors. Let's choose the Laplacian-based point cloud skeletonization method by Cao et al.~\cite{cao_point_2010} as an example. Since the curve skeletons are extracted by sampling skeletal vertices from those skeletal points and the connections are constructed in the meantime, the precision of topological information represented by curve skeletons 
depends on the topological features of the skeletal points. 

It is usually infeasible to directly determine the homotopy equivalence between two point cloud shapes due to the inherent gap between discrete point sets and continuous manifolds. However, an alternative approach is to analyze homology groups of the shapes using persistent homology. A study by Niyogi et al.~\cite{niyogi2008finding} uncovers the possibility of accurately recovering topological invariants from discrete point sets sampled from a manifold. 

In fact, the utilization of persistent homology for shape analysis and comparison has been studied in previous studies, known as size functions. In a study by Verri and Uras \cite{verri1996metric}, the properties of the size functions and the application on shape recognition are discussed. Two computation methods for size functions are given by Frosini \cite{frosini1992measuring}, and the deformation distance and the size functions are linked by a clear definition. For more practical research on shape comparison, please refer to \cite{verri1993use}. Recently, Bergomi et al. \cite{bergomi2019towards} argued the stability of metrics is induced by persistent homology while discussing the equivalence set in machine learning. Varava et al.~\cite{varava2016caging} presented proposed control and motion planning algorithms for caging grasping by detecting non-trivial $H_1$ homology group.

\subsubsection{Problem Statement \& Preliminaries}
An ideal skeleton preserves the topology of the original shape \cite{cornea_curve-skeleton_2007}. Evaluating the preserved topology of skeletonization requires a metric that effectively captures topological shape similarity. In this part of the work, we analyze the topological changes in shape that occur during point cloud skeletonization. We assume a point set intermediate product, which is point-wise in relation to the original point cloud, is generated while performing point cloud skeletonization.

Our problem is defined within a discrete space of a point set $X=\{\mathbf{x}_1, \dots, \mathbf{x}_m\}\subseteq\mathbb{R}^{m\times 3}
$. Within the space, a point cloud shape is a point set $P$, where $P = \{\mathbf{p}_0, \dots, \mathbf{p}_n\} \subseteq X, n\leq m$. For the purposes of comparison, we define two point cloud shapes in our problem: $P_o$, which represents the original point cloud, and $P_s$, which represents the skeletal point set with approximately the same number of points as $P_o$.

Simple geometries, such as simplices, are commonly employed to represent local geometric features of $P$. In general, a simplex is the convex hull of its vertices, as depicted by the following definition.
\begin{definition}
\label{def:simlices}
    A $k$-simplex $\sigma$ is a convex hull of its $k+1$ affinely independent vertices, denoted as $\sigma:={\rm convh}\{\mathbf{v}_0, \dots, \mathbf{v}_k\}$, where $\mathbf{v}_0,\dots,\mathbf{v}_k\in \mathbb{R}^3$ are its $k+1$ vertices.    
\end{definition}
\begin{rem}
The ``affinely independent vertices'' means that given vertices $\mathbf{v}_0, \dots,\mathbf{v}_k$, the vectors $\mathbf{v}_1-\mathbf{v}_0, \dots,\mathbf{v}_k-\mathbf{v}_0$ are linearly independent. As illustrated in Fig.~\ref{fig.simlices}, simplices in dimensions of 0, 1, 2, and 3 are typically represented as a point, a line, a triangle, and a tetrahedron respectively. The convex hull of a subset of vertices in a simplex is a face of the simplex.
\end{rem} 
\begin{figure}
\centering
    \includegraphics[width=0.4\textwidth]{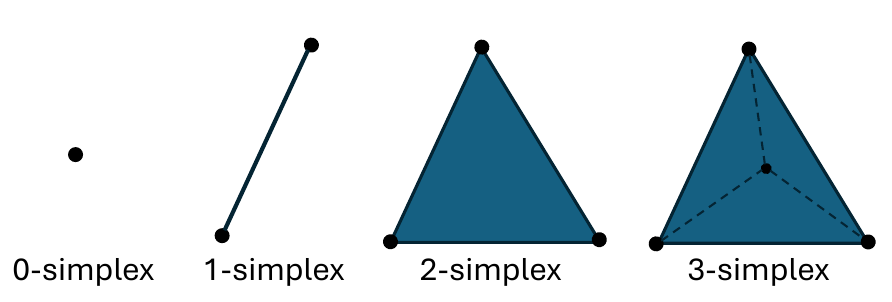}
    \caption{Some of the simplex examples.}
    \label{fig.simlices}
\end{figure}

For a more complex structure, a simplicial complex is defined by a union set of simplices, which we use to bring a defined set of point-clouds (including skeletonization) as follows
\begin{definition}
\label{def:simplicial_complex}
     A simplicial complex $K$ is a finite collection of simplices, such that if $\sigma,\sigma\prime \in K$ then $\sigma \cap \sigma\prime$ is either empty or a face of both $\sigma$ and $\sigma\prime$. 
     
     An abstract simplicial complex $\hat{K}$ is a collection of sets of geometric elements taken from the simplicial complexes $K$.
\end{definition}

\begin{rem}
    In an abstract simplicial complex, a non-zero subset of a set in the abstract simplicial complex is also contained in the abstract simplicial complex. In fact, some geometrical properties are eliminated by the abstract simplicial complex relative to the simplicial complex, and only the combination relationships are preserved.
\end{rem}

The Vietoris-Rips complex provides a natural method for constructing an abstract simplicial complex from a finite metric space and can be used to extract topological features through complex filtration~\cite{attali2011vietoris}. The given finite metric space serves as a guide for combinations of geometrical elements.
\begin{definition}
    A finite metric space $(X,\partial_X)$ of the discrete space of the point set $X$ is a metric space such that the distance between a pair of points $\mathbf{x}_i$ and $\mathbf{x}_j$ ( $\mathbf{x}_i,\mathbf{x}_j\in X$) is given by $\partial_X(\mathbf{x}_i,\mathbf{x}_j)$.
\end{definition}
\begin{definition}
\label{def.vierips}
    Given a finite metric space $(X, d_X)$ and a fixed radius $\epsilon$, the Vietoris-Rips complex ${\rm VR}_{\epsilon,P}(X, d_X)$ of point cloud shape $P$ is an abstract simplicial complex where the vertices are the points in $P$, and each $k$-simplex $\sigma = {\rm convh}\{\mathbf{v}_0, \dots, \mathbf{v}_k\} \in {\rm VR}_{\epsilon,P}(X, d_X)$ satisfies $d_X(\mathbf{v}_i, \mathbf{v}_j) < \epsilon$ for all $1 \leq i < j \leq k$.
\end{definition}
Homology in a topological space is characterized by its homology groups, defined using the boundary homomorphism~\cite{beksi20163d}. It serves as a topological invariant, allowing the comparison of point cloud shapes based on their topological properties.
\begin{figure}
\centering
    \includegraphics[width=0.2\textwidth]{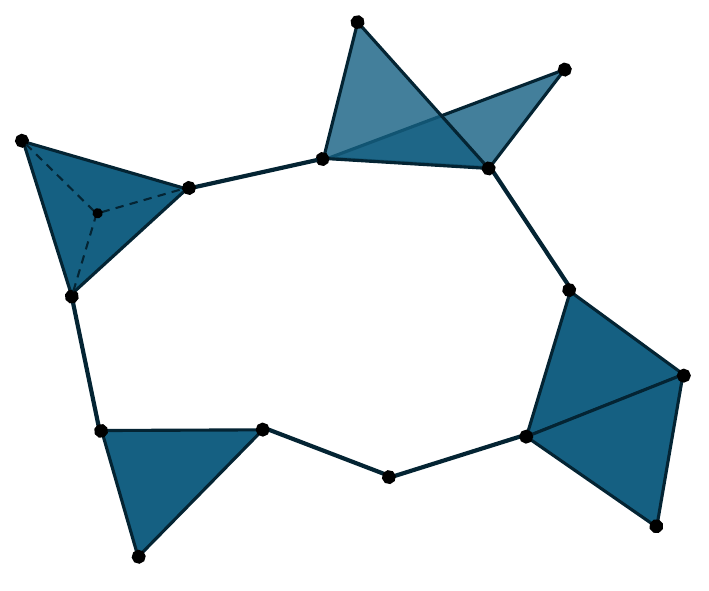}
    \caption{An example of a Vietoris-Rips complex\cite{beksi20163d}.}
    \label{fig.RipsComplex}
\end{figure}
\begin{definition}
    A boundary operator/homomorphism $\partial_d$ is the homomorphism that connects the chain complex in dimension $d$ and $d-1$, written as $\partial_d: C_d(K)\rightarrow C_{d-1}(K)$. For $i<0$, $C_i\equiv 0$.
\end{definition}

\begin{definition}
 \label{def.homology}
    Let $d$-th boundaries and cycles of $P$ built in point space be denoted as $B_d(P)={\rm im}(\partial_{d+1})$ and $Z_d(P)={\rm ker}(\partial_{d})$ respectively. The $d$-th homology group of topological space relative to point cloud $P$ is defined by
    \begin{equation}
        H_d(K):= {\rm ker}\ \partial_d/{\rm im} \ \partial_{d+1}=Z_d(P)/B_d(P).
    \end{equation}
\end{definition}
\begin{rem}
     The $d$-th homology group $H_d(K)$ of a point cloud shape represents the number of $d$-dimensional holes in the corresponding simplicial complex, describing its topological structure. 
\end{rem}

Definitions \ref{def.vierips}-\ref{def.homology} provide the mathematical foundation for the homology group of point cloud shapes. As a topological invariant, the homology group enables the abstraction and comparison of topological properties between shapes. For point clouds, this can be achieved using persistent homology.
 
\subsubsection{Methodology}
Persistent homology is a powerful tool for analyzing topological changes in point clouds. Traditional skeletonization methods define a ``good'' skeleton as one that preserves topology by remaining homotopic to the original shape~\cite{cornea_curve-skeleton_2007,sobiecki2014comparison}, but this is challenging to quantify. Inspired by Edelsbrunner et al.~\cite{edelsbrunner2002topological}, we use persistent homology to extract topological features from both the skeleton and the original point cloud, quantifying their dissimilarity through topological distances.

Let $\epsilon$ define the radius for $\epsilon$-neighborhoods used to form complexes, persistent homology tracks the evolution of topological features as $\epsilon$ increases. At $\epsilon = 0$, each point is an isolated component ($H_0$ feature). As $\epsilon$ grows, points merge into higher-dimensional simplices, creating loops ($H_1$) and cavities ($H_2$). These features appear and disappear at different rates; minor features vanish quickly, while major ones persist longer. We compute complex growth using the Vietoris–Rips complex (Definition~\ref{def.vierips}). The persistence of topological features is encoded by their birth and death times, which can be visualized using persistence barcodes~\cite{carlsson2004persistence} and compared in barcode space. As shown in Fig.~\ref{fig.barcode}, the start and end points of each bar in the barcode represent the birth and the death of a $H_0$ feature respectively.

In barcode space, the bottleneck distance and Wasserstein distance are standard metrics for measuring the dissimilarity between two barcodes. For point cloud skeletonization, the two barcodes are assumed to be generated from the original point cloud and its pointwise skeletal representation. Let $[a_1, b_1)$ and $[a_2, b_2)$ be two persistence intervals, the $\infty$-distance between the two intervals is defined as:  
\[
    d_\infty([a_1, b_1], [a_2, b_2]) = \max(|a_1 - a_2|, |b_1 - b_2|).
\]
Since only the major topological features are typically expected to be preserved, and these features exhibit the greatest persistence, it is necessary to filter out minor local features before making comparisons. Let $ P_o $ and $ P_s $ denote the original point cloud and the skeletal point set, respectively. The maximum nearest-neighbor distance for any point $ \mathbf{p}_i \in P_o $ is given by $ \epsilon^* $, defined as
\begin{equation}
   \epsilon^*=\sup_{\mathbf{p}_{o,i}\in P_o} \Vert \mathbf{p}_i - \mathbf{\psi}(\mathbf{p}_i) \Vert,
   \label{eq.epsilon_th}
\end{equation}
where $ \psi $ is the mapping between a point $ \mathbf{p}_i $ and its nearest neighbor in $ P_o $. We assume that local minor features, such as small connected components, disappear once all points at least have one connection to their neighbor points. For instance, a point that forms the smallest connected component is born at the beginning ($ \epsilon = 0 $) and disappears when it is connected to its nearest neighbor. Therefore, bars in the persistence barcode representing minor features can be eliminated by removing those bars with persistence less than $ \epsilon^* $. Let $ B_o $ and $ B_s $ be the filtered barcodes of $ P_o $ and $ P_s $, respectively. Note that the sizes of $ P_o $ and $ P_s $ are normalized to fit within a standard cubic bounding box for better comparison. The \textit{bottleneck distance} between the filtered barcodes is then given by:
\begin{equation}
    d_B(B_o, B_i) = \inf_{\phi} \sup_{Z \in B_o} d_\infty(Z, \phi(Z)),
    \label{eq.bottleneck_dis}
\end{equation}
where $\phi$ ranges over all possible bijections between $B_1$ and $B_2$. And the normalized \textit{p-Wasserstein distance} is defined as:
\begin{equation}
    d_{W_p}(B_1, B_2) = \frac{1}{n_{b}}\left( \inf_{\phi} \sum_{Z \in B_1} d_\infty(Z, \phi(Z))^p \right)^{\frac{1}{p}},
    \label{eq.wasserstein_dis}
\end{equation}
where the distance is normalized by $n_{b}$, the number of selected bars.

\begin{prop}
\label{prop.topological_smilarity}
The topological similarity between two normalized point clouds, $P_o$ and $P_s$, within a bounding box of diagonal $\epsilon_{\text{max}}$, is determined by the distance between their most persistent homology features, calculated from the growth of the Vietoris–Rips complex (Definition~\ref{def.vierips}). The persistence patterns of homological features are represented in barcode space and filtered by a threshold $ \epsilon^* $ given in Eq.~(\ref{eq.epsilon_th}). We assume $ B_o $ and $ B_s $ are the persistent barcodes of $ P_o $ and $ P_s $, where the start and end points of each bar correspond to the birth and death values of an $ H_0 $ feature. Also, the dissimilarity $ d_{o,s} $ between $ B_o $ and $ B_s $ is measured using appropriate distance metrics in barcode space, such as the bottleneck distance given in Eq.~(\ref{eq.bottleneck_dis}) or the Wasserstein distance in Eq.~(\ref{eq.wasserstein_dis})\cite{carlsson2004persistence}.

Now, let $ N_o $ denote the number of points in $ P_o $ and $ \epsilon_{\text{max}} $ the maximum value of $ \epsilon $ (Definition~\ref{def.vierips}) in point cloud space. The topological similarity between $ P_o $ and $ P_s $ is quantified as:
\[
\begin{cases} 
\text{High Similarity}, & \text{if } d_{o,s} < d^* \text{ where } 0 < d^* \leq \epsilon_{\text{max}}, \\
\text{Low Similarity}, & \text{if } d_{o,s} \geq d^*.
\end{cases}
\]
As $ d^* \to 0 $ and $ N_o \to \infty $, this comparison provides an accurate measure of topological similarity.
\end{prop}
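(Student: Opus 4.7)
The plan is to establish the proposition by combining two foundational results from computational topology: the stability of persistence diagrams under perturbations of the input, and the convergence of Vietoris–Rips homology of a sampled point set to the homology of the underlying manifold as sampling density grows. I would first invoke the stability theorem of Cohen-Steiner--Edelsbrunner--Harer, which bounds the bottleneck distance between persistence diagrams in terms of the Hausdorff (or $L^\infty$) distance between the generating spaces. Applied to the Vietoris--Rips filtrations built over $P_o$ and $P_s$ (Definition~\ref{def.vierips}), this yields $d_B(B_o,B_s) \leq 2\, d_H(P_o,P_s)$, and an analogous inequality for $d_{W_p}$ up to a dimensional constant. This directly legitimizes the classification rule: if $d_{o,s}<d^*$, then the two barcodes agree up to a controlled bottleneck perturbation, so the surviving $H_k$ generators of $P_s$ are in bijection with those of $P_o$.

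Next I would justify the filtering threshold $\epsilon^*$ in Eq.~(\ref{eq.epsilon_th}). By construction, once the filtration radius reaches $\epsilon^*$ every point of $P_o$ has been absorbed into a simplex with its nearest neighbor, so any bar of $H_0$-persistence strictly less than $\epsilon^*$ corresponds to an isolated-point artifact rather than a genuine feature of the sampled shape. Removing these short bars leaves a diagram whose entries reflect features that persist long enough to be topologically meaningful, which matches the relaxed notion of topology preservation from Cornea et al.~\cite{cornea_curve-skeleton_2007}. To handle the limiting regime $N_o \to \infty$ and $d^* \to 0$, I would call upon a Niyogi--Smale--Weinberger-type argument: if $P_o$ is sampled sufficiently densely from an underlying manifold with positive reach, the Vietoris--Rips complex recovers the correct homology with high probability on an interval of radii that grows as the density increases. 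In that regime $\epsilon^*\to 0$ almost surely, the filtered barcode $B_o$ stabilizes to the true homology barcode, and the permissible $d^*$ can likewise be driven to zero while still separating matched from unmatched features, making the classification asymptotically exact.

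The main obstacle I anticipate is that $P_s$ is not a small Hausdorff-perturbation of $P_o$: it is a deliberately contracted point set that may lie far from the original surface, so the naive stability inequality $d_B(B_o,B_s)\leq 2\,d_H(P_o,P_s)$ is too loose and would declare almost every skeletonization as ``Low Similarity''. Overcoming this requires exploiting the normalization step (bounding box of diagonal $\epsilon_{\max}$) and restricting attention to the \emph{relative} persistence of the surviving bars after $\epsilon^*$-filtering, so that only the ratios of feature lifetimes — which are preserved under homotopy-respecting contractions such as Laplacian-based flows — enter the comparison. Formalizing this invariance, likely through an interleaving argument between the Vietoris--Rips filtrations of $P_o$ and $P_s$ parametrized by the contraction map rather than by ambient distance, is where the substantive work lies; the rest of the proof is then a direct consequence of standard persistence-theoretic machinery.
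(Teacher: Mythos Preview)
Your plan is sound and considerably more rigorous than the paper's own argument. The paper's proof is purely heuristic: it observes that as $N_o \to \infty$ the maximum nearest-neighbor distance $\epsilon^* \to 0$, so the $\epsilon$-neighborhood reconstruction of $P_o$ converges to the true surface $S_o$ and the sampled homology therefore converges to the true homology, while short $H_0$ bars (noise) are suppressed by the $\epsilon^*$-filter. It neither invokes the Cohen-Steiner--Edelsbrunner--Harer stability theorem nor formalizes the Niyogi--Smale--Weinberger guarantee (though the latter is cited in the surrounding text), and --- crucially --- it never addresses the comparison with $P_s$ at all: the entire argument concerns only the fidelity of $B_o$ to the underlying shape, with the role of the threshold $d^*$ on $d_{o,s}$ left implicit and supported instead by the empirical illustrations in Fig.~\ref{fig.SimpleGeo_barcode}. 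The obstacle you correctly flag --- that $P_s$ is a deliberate contraction lying far from $P_o$ in Hausdorff distance, so naive stability is too loose --- is simply not confronted in the paper. Your proposed interleaving argument parametrized by the contraction map would turn the proposition into a genuine theorem; the paper instead treats it as a framework definition justified by plausibility and experiment. Your route buys rigor at the cost of substantial extra machinery; the paper's buys brevity at the cost of leaving the central comparison step unproven.
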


It has been proved by Niyogi et al.~\cite{niyogi2008finding} that it is able to recover the homology of shapes from sufficient discrete samples. Homology group, as a topological invariant, is able to be applied for shape analysis \cite{verri1996metric,verri1993use,frosini1992measuring}. 

\begin{proof}
Let $P_o$ be a discrete point set consisting of $N_o$ points evenly sampled from the surface $S_o$ of shape $o$. The largest nearest-neighbor distance among all points $\mathbf{p}_i \in P_o$ is denoted as $\epsilon^*$ (Eq.~\ref{eq.epsilon_th}). The surface $S_o$ can be reconstructed by connecting each point $\mathbf{p}_i \in P_o$ to all its $\epsilon$-neighbors, assuming that every point in $P_o$ is a valid sample from $S_o$. Let $S_r$ denote the reconstructed surface.

As $N_o \to \infty$, it follows that $\epsilon^* \to 0$ and $S_r \to S_o$. This implies that, with sufficiently dense sampling, the reconstructed surface approximates the original surface with increasing accuracy. Consequently, the homology features of the sampled point set will converge to those of the original shape. Furthermore, as $\epsilon^* \to 0$, the persistence of local $H_0$ features, which primarily represent noise, diminishes rapidly. Since $\epsilon^*$ is the largest nearest-neighbor distance in the point cloud, it primarily affects local connectivity rather than global structure. When $\epsilon^*$ is sufficiently small, computing the $\epsilon$-Vietoris–Rips complex with a small $\epsilon$ already ignores these spurious $H_0$ features. Meanwhile, major topological features, such as global connectivity patterns, remain unaffected. This behavior is illustrated in Figs.~\ref{fig.SimpleGeo_barcode} (c), (i), (d), and (j).
\end{proof}

\begin{rem}
The analysis of simple geometries uncovers a fundamental link between topological changes and the persistence barcodes of homology. As shown in Fig.~\ref{fig.SimpleGeo_barcode}, deformations that alter connectivity (e.g., Figs.~\ref{fig.SimpleGeo_barcode} (c), (g)) usually result in significant changes in the longest segments of the barcode. Conversely, deformations that preserve topology (e.g., Fig.~\ref{fig.SimpleGeo_barcode} (a), (e)) maintain consistent patterns in the longest bars. These effects are quantitatively captured by the corresponding values of Wasserstein and bottleneck distances. Notably, the bottleneck distance shows greater robustness to non-topological changes, highlighting its stability as a metric for assessing topological differences.

\begin{figure}[t!]
\centering
\subfloat[Similar topology]{\includegraphics[width=0.23\textwidth]{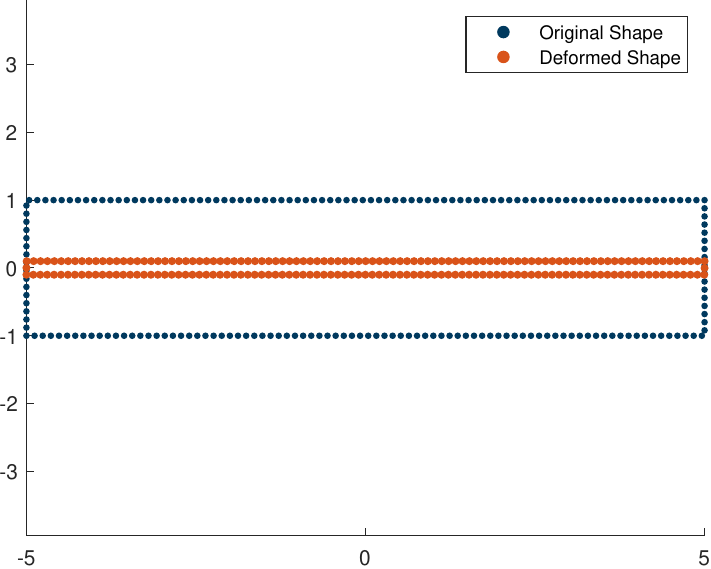}}
\subfloat[$d_B=0.018,\ d_{W_p}=0.160$]{\includegraphics[width=0.23\textwidth]{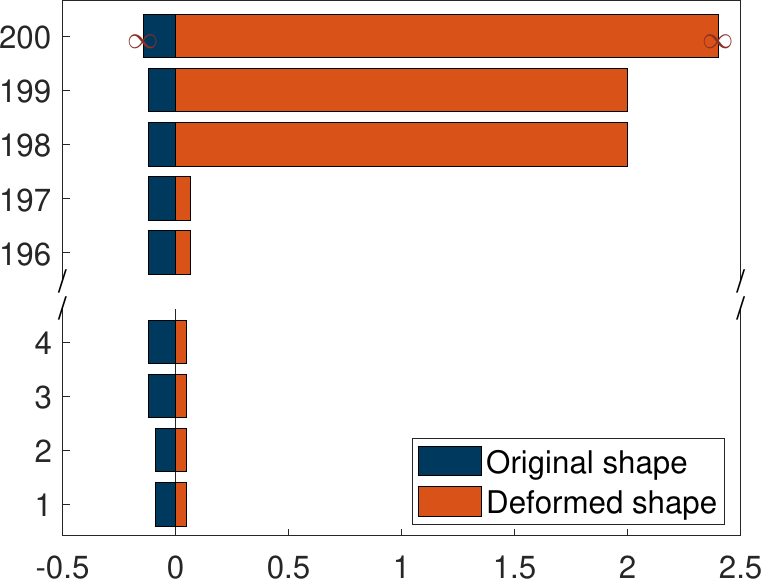}}\\
\subfloat[Changed topology]{\includegraphics[width=0.23\textwidth]{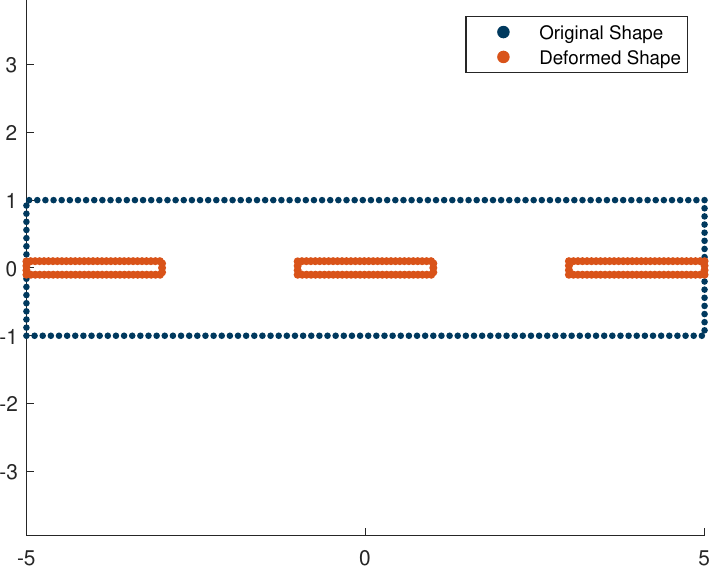}}
\subfloat[$d_B=1.007,\ d_{W_p}=0.232$]{\includegraphics[width=0.23\textwidth]{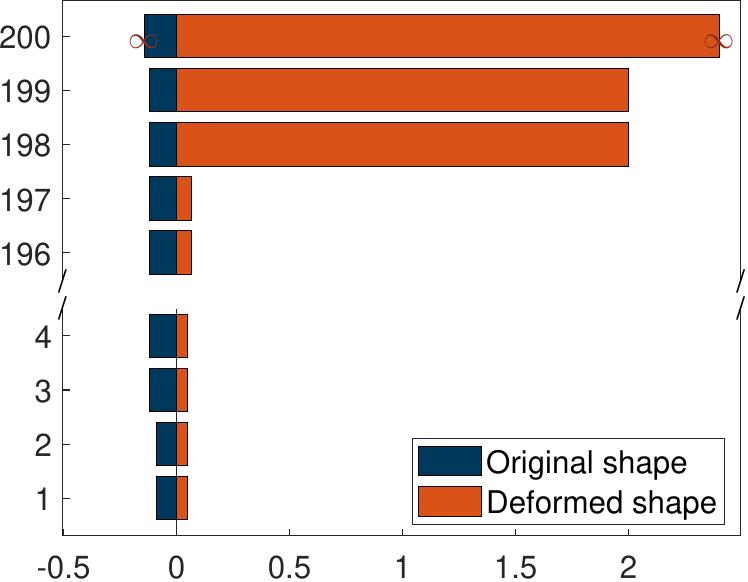}}\\
\subfloat[Similar topology]{\includegraphics[width=0.23\textwidth]{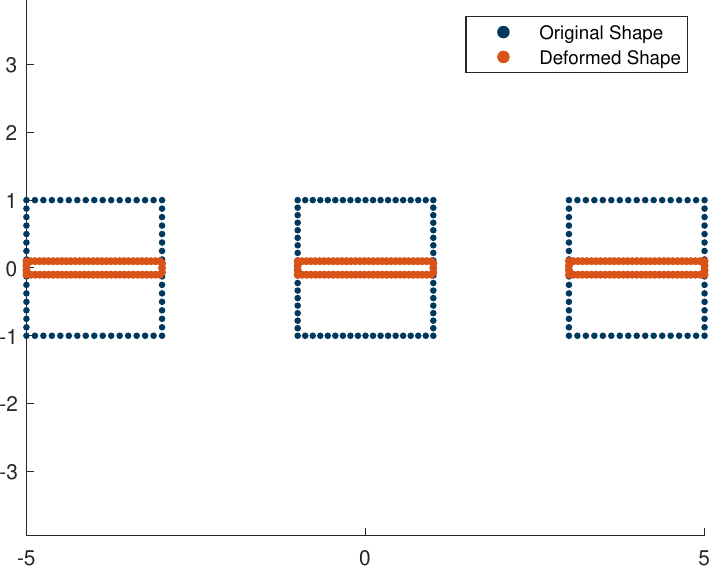}}
\subfloat[$d_B=0.057,\ d_{W_p}=0.043$]{\includegraphics[width=0.23\textwidth]{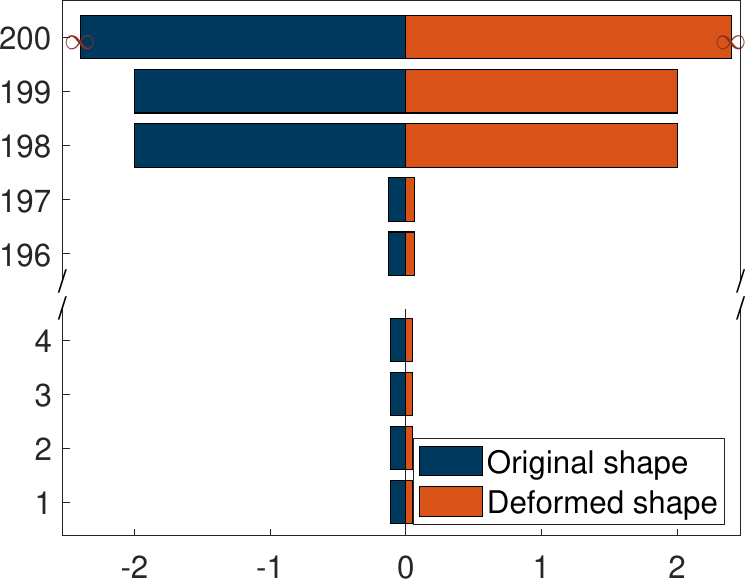}}\\
\subfloat[Changed topology]{\includegraphics[width=0.23\textwidth]{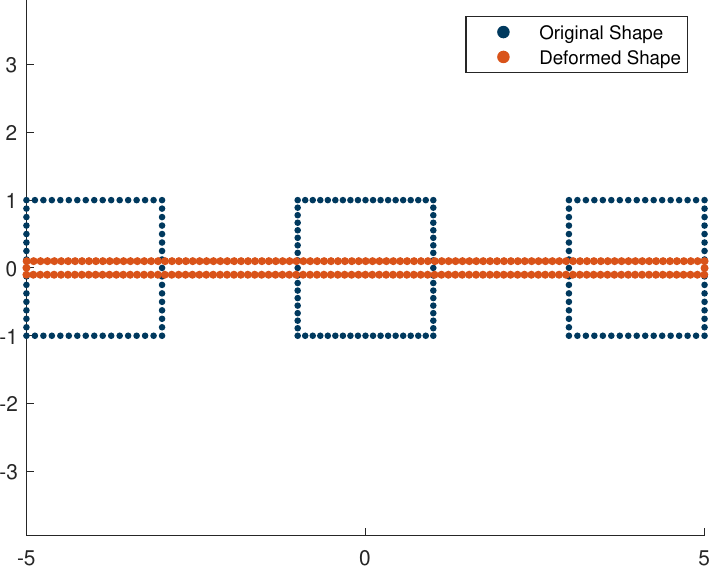}}
\subfloat[$d_B=1.000,\ d_{W_p}=0.208$]{\includegraphics[width=0.23\textwidth]{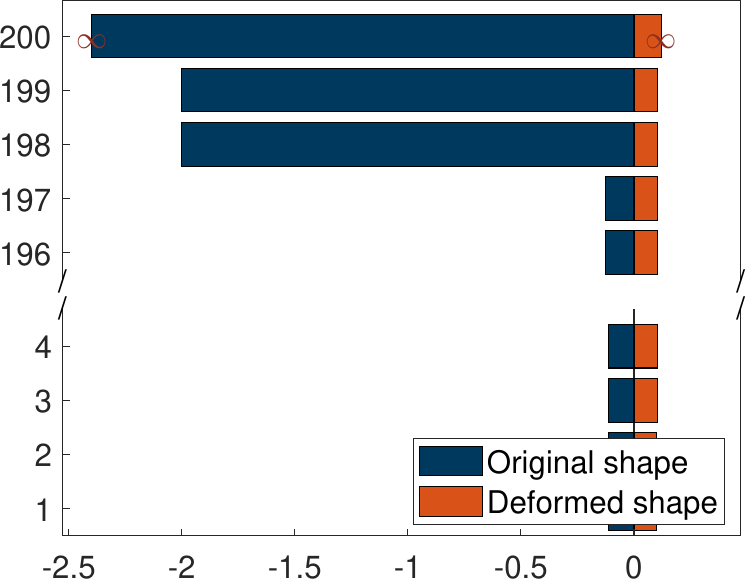}}\\
\subfloat[Dense Points]{\includegraphics[width=0.23\textwidth]{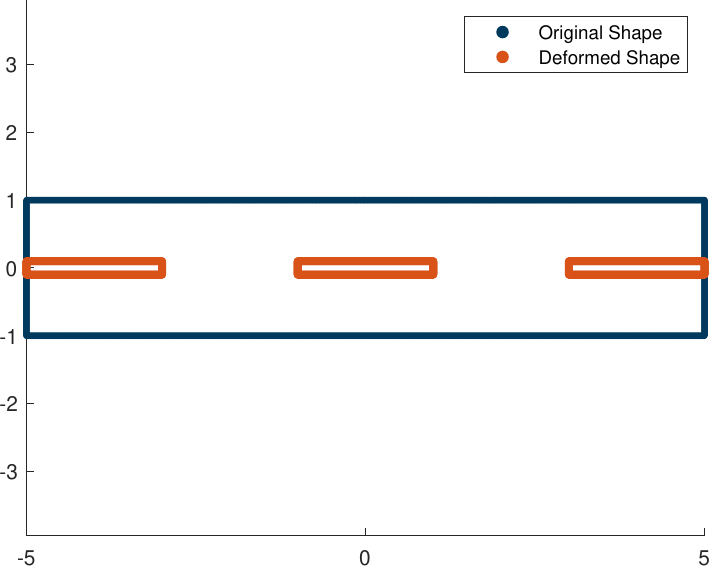}}
\subfloat[$d_B=1.006,\ d_{W_p}=0.025$]{\includegraphics[width=0.23\textwidth]{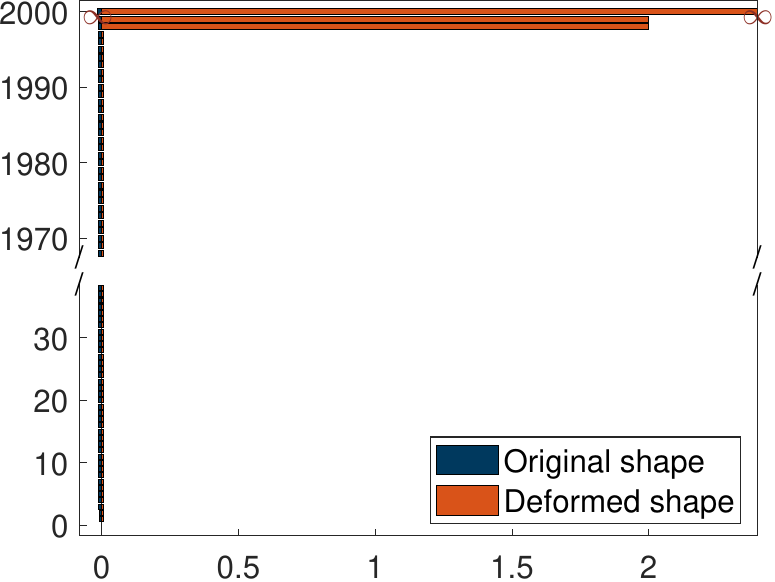}}
\caption{Persistent Homology Analysis of Simple Geometries ($H_0$ features). Shape of (a), (c), (e), (g) are with 200 points, while shape of (i) are with 2000 points.}
\label{fig.SimpleGeo_barcode}
\end{figure}
\end{rem}

\begin{rem}
    This topological similarity evaluation method is only applicable to the point-wise deformed point cloud shapes. And, it would be sensitive to the density of points since the most dominated bars are filtered with the largest distance value of the points in the original point cloud shape to their nearest neighbors.
\end{rem}

Fig.~\ref{fig.barcode} shows the persistence patterns of homology features for the original and skeletal point set data of the hammer and biscuit shapes. In Fig.~\ref{fig.barcode}(a), the skeletal points are pushed inward, reducing their distances to neighbors. This results in faster merging of connected components and shorter persistence for $H_0$ features, with local connections being formed more quickly while global features remain stable. The points move primarily in a radial direction, reducing distances more in this direction than along the axial direction, which corresponds to the medial axis. Good skeletonization preserves topological features along the medial axis. In the hammer shape (Fig.~\ref{fig.barcode}(a)), longer-lasting homology features indicate less change than in the biscuit shape (Fig.~\ref{fig.barcode}(b)), where the topology is significantly altered, including inconsistent spikes. The property of topology preservation in the skeletonization process is quantified by the barcode pattern distances. The contraction of the hammer shape results in smaller bottleneck and Wasserstein distances for the homology persistence barcode compared to the biscuit shape, as shown in the caption of subfigures of Fig.~\ref{fig.barcode}.

In summary, persistent homology quantifies topological similarity by comparing the persistence of homology features using an appropriate distance metric, such as bottleneck or Wasserstein distances, between the barcode patterns of the original and skeletal point sets.

\begin{figure}[t!]
\centering
    \subfloat[$d_B=0.0084$,\\$d_{W_p}=0.0076$]{\includegraphics[width=0.23\textwidth]{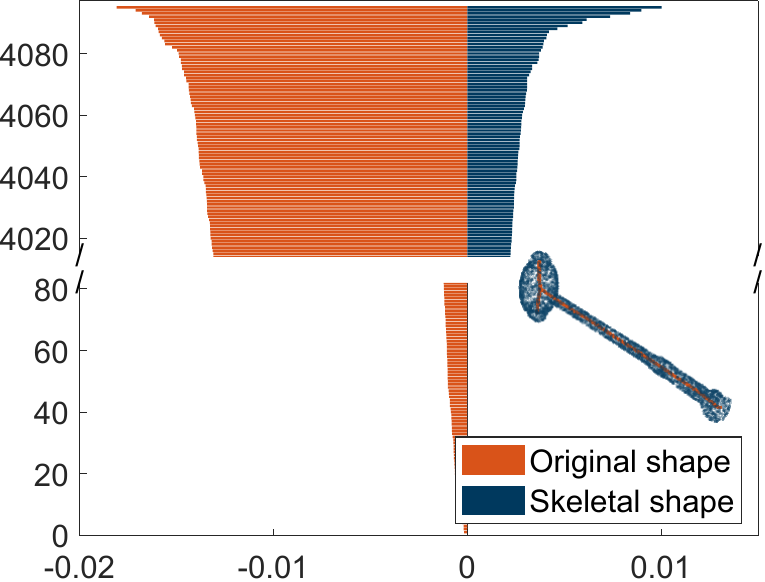}\label{fig.barcodeA}}
    \subfloat[ $d_B=0.0561$, \\$ d_{W_p}=0.0203$]{\includegraphics[width=0.23\textwidth]{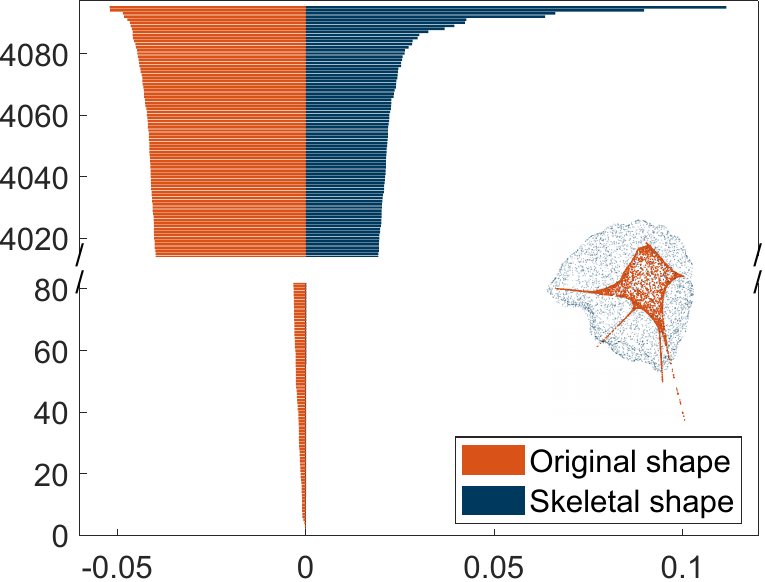}\label{fig.barcodeB}}
    \caption{The barcode of persistent homology ($H_0$ features). The input point clouds are scaled to fit within a cubic bounding box whose diagonal is 1.6 and only the top 5\% and bottom 5\% of persistence bars are shown for clarity)}
    \label{fig.barcode}
\end{figure}
 
\subsection{Boundedness}
The skeleton of a shape represented by a point cloud is expected to lie within the shape, or, in other words, to be bounded by the original shape surface~\cite{sobiecki2013qualitative, cornea_curve-skeleton_2007}, before another characteristic, centeredness, can be discussed. However, the challenge problem is that the surface represented by point cloud data is implicit and unclear. This subsection will discuss the boundedness of the skeletal points/vertices. Given a point in the point cloud space, our method is able to give a score that depicts the bounding state of the point relative to the surface of a shape. Based on the bounding state of the point, the overall bounding state of a resultant skeletal shape is also given. 

\subsubsection{Problem statement}
As shown by the bottom-right skeletonization result of Fig.~\ref{fig.barcode}(b), skeletal points or vertices extending beyond the boundary of the shape surface are generally undesirable and should be considered incorrect convergence/transformation. \textit{Boundedness} refers to the property of the resultant skeletons whereby all skeletal elements are expected to remain within the original shape. For point cloud shapes, this presents a significant challenge, as the surface representation by discrete points is inherently ambiguous.

In this subsection, boundedness is defined with respect to points. A skeletal point set $P_s \subset\mathbb{R}^{m\times3}$ consists of skeletal points $ \{ \mathbf{p}_{s,i} \mid \mathbf{p}_{s,i} \in \mathbb{R}^3 \} $, corresponding to the original point cloud $P_o\subset\mathbb{R}^{n\times3}$. The shape represented by the point cloud $P_o$ is denoted as $\Omega$. Each point $\mathbf{p}_{s,i}\in P_s$ is expected to be bounded by the shape boundary $\partial\Omega$. 

A curve skeleton $G_s$ corresponding to the shape $\Omega$ is represented as a graph structure $(V, E)$, where $V \subseteq \mathbb{R}^3$ is the set of skeletal vertices, and $E \subseteq V \times V$ is the set of skeletal edges. Each element $e_i \in E$ or $\mathbf{v}_i \in V$ is expected to be bounded by $\partial\Omega$.

\subsubsection{Methodology}
Let $ P $ be an $ n $-point point cloud in a discrete space of point set $ X \subseteq \mathbb{R}^{m \times 3} $, and let $ \mathbf{x}_i \in X $ be an arbitrary point in $ X $. We define the direction vector from $ \mathbf{x}_i $ to $ \mathbf{p}_i $ as  
\begin{equation}
    \mathbf{d} (\mathbf{x}_i, \mathbf{p}_i) \coloneqq \frac{\mathbf{p}_i - \mathbf{x}_i}{\|\mathbf{p}_i - \mathbf{x}_i\|}.
    \label{eq.direction_vec}
\end{equation}
where $ \mathbf{p}_i \in P $ is an arbitrary point in $ P $. If we interpret the normalized direction vectors from all other points in $ P $ relative to $ \mathbf{x}_i $ as coordinates, we observe that these points are projected onto the surface of a unit sphere. This projection will result in an intact sphere surface if $ \mathbf{x}_i $ is fully bounded by a shape represented by the point cloud $ P $, as illustrated in Fig.~\ref{fig.bound_proj}. Conversely, suppose $ \mathbf{x}_i $ is only partially bounded. In that case, there will be regions on the sphere surface with no points, resulting in holes on the sphere surface in simple language. Based on this phenomenon, the following definitions are given for obtaining proper boundedness metrics:  
\begin{definition}
\label{def.boundedness_pts}
Based on the assumption that the shape represented by $P$ is a pure convex shape, the boundedness of $ \mathbf{x}_i $ by point cloud shape $P$ can be evaluated by measuring the areas on the sphere surface filled by the projected points. Let $ \mathbf{p}_{\mathbf{x}_i ,\mathbf{p}_i} = \mathbf{d} (\mathbf{x}_i, \mathbf{p}_i)$ (Eq.~\ref{eq.direction_vec}) denote the point resulting from the projection of $ \mathbf{p}_i $ relative to $ \mathbf{x}_i $, and let the corresponding set of sphere points be $ \mathcal{P}_{\mathbf{x}_i , P} $. The boundedness metric of $ \mathbf{x}_i $ relative to point cloud shape $P$ is defined as  
\begin{equation}
    \beta_{\mathbf{x}_i \circ P} \coloneq \frac{S_{\mathbf{x}_i, P}}{4 \pi r^2},
    \label{eq.boundedness_pts}
\end{equation}
where $ r = 1 $ (for a unit sphere) and $ S_{\mathbf{x}_i, P} $ is the total area covered by the points $ \mathbf{p}_{\mathbf{x}_i , \mathbf{p}_i} \in \mathcal{P}_{\mathbf{x}_i , P} $.  
\end{definition}

\begin{definition}
\label{def.curve_skeleton_point}
    Given a curve skeleton $ G_s = (V, E) $, an edge $ e_i = (\mathbf{v}_j, \mathbf{v}_k) \in E $ can be parameterized as  
    \[
        e_i(t) = t \mathbf{v}_j + (1 - t) \mathbf{v}_k, \quad t \in [0, 1].
    \]
   A point $ \mathbf{p}_{g}$ of the curve skeleton $G_s$ is given by $ \mathbf{p}_{g}  = e_i(t) $, where $ t \in [0, 1]$ and $e_i$ is an arbitrary edge of $G_s$.
\end{definition}

\begin{definition}
Let $ P_o, P_s \subseteq X $ be the skeletal points in a discrete space of $ X $. Let $ N_s $ be the total number of points in $ P_s $ and $ N_{s,b} $ be the number of points in $ P_s $ bounded by $ P_o $ (Proposition~\ref{prop.bounded_pts}). The boundedness of $ P_s $ by $ P_o $ is defined as  
\begin{equation}
    \mathcal{B}_{P_s \circ P_o} \coloneqq \frac{N_{s,b}}{N_s}.
    \label{eq.boundedness_skeletal_points}
\end{equation}
For curve skeleton $G_s$, we evenly sample $N_{sp}$ points from curve skeleton $G_s$ according to Definition~\ref{def.curve_skeleton_point}, and $N_{sp,b}$ is the number of the bounded points. A similar definition of the curve skeleton boundedness can be given by
\begin{equation}
    \mathcal{B}_{G_s \circ P_o} \coloneqq \frac{N_{sp,b}}{N_{sp}}.
    \label{eq.boundedness_curve}
\end{equation}
\end{definition}

Accurately computing the area covered by points on a sphere surface is challenging. To simplify the problem, we project the sphere points onto a 2D plane to enable Delaunay triangulation in a single step.  

Given the coordinates of an arbitrary point on the unit sphere $ [x_s, y_{s}, z_{s}]^T $, its corresponding coordinates on the 2D plane are given with sinusoidal projection method by \cite{seong2002sinusoidal} 
\[
    \left\{
        \begin{aligned}
        x_p &= \sqrt{x_s^2 + y_{s}^2} \cdot \tan^{-1}(y_{s}/ x_s), \\  
        y_p &= r \cdot \tan^{-1}\left(z_{s}/\sqrt{x_s^2 + y_{s}^2}\right)
        \end{aligned}
    \right.
\]
where $ [x_p, y_p] $ are the planar coordinates corresponding to $ [x_s, y_{s}, z_{s}] $.  

After projecting the points onto the 2D plane, we apply Delaunay triangulation. The triangulation results can then be mapped back to the sphere surface to compute the filled areas, denoted as $S_{x_i,P}$ in Eq.~(\ref{eq.boundedness_pts}). The total filled area is approximated by the sum of the areas of all triangles, as illustrated in Fig.~\ref{fig.tri_sphere}. 
\begin{prop}
\label{prop.bounding_judgement}
Assuming that the shape $\Omega\subset\mathbb{R}^3$ with boundary $\partial \Omega$ represented by $P_o$ is a pure convex shape. $N_o$ is the number of points of $P_o$. As shown in Fig.~\ref{fig.tri_sphere}, let $S_{tri}$ be the approximation of $\hat{S}_{x^\prime,P_o}$ measured by the sum of all triangle areas. Then $\hat{\beta}_{x^\prime,P_o}=\hat{S}_{x^\prime,P_o}/(4\pi r^2)$ is an approximation of $\beta_{x^\prime\circ P_o}$ in Eq.~(\ref{eq.boundedness_pts}). A point $ x^\prime \in X $ is considered to be bounded by $ \partial\Omega_o $ if $ \hat{\beta}_{x^\prime\circ P_o} < \beta^* $, where $ 0<\beta^*\leq 1 $ is a boundedness threshold dependent on the point density of $ P_o $. If $N_o\to \infty$, $\beta^*\to 1$ results an accurate approximation of boundedness.
\label{prop.bounded_pts}
\end{prop}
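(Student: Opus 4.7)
The plan is to use convexity of $\Omega$ to obtain a clean geometric dichotomy between interior and exterior points at the level of the ideal central projection onto $S^2$, and then quantify how well the Delaunay-triangulated estimator in Eq.~(\ref{eq.boundedness_pts}) reproduces this dichotomy as $N_o\to\infty$. Throughout I would work with the central projection $\Pi_{x^\prime}:\partial\Omega\to S^2$, $q\mapsto(q-x^\prime)/\|q-x^\prime\|$, whose image is the continuous counterpart of the finite set $\mathcal{P}_{x^\prime,P_o}$ in Definition~\ref{def.boundedness_pts}.

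First I would treat the continuous limit. If $x^\prime\in\mathrm{int}(\Omega)$, convexity implies that for every direction $u\in S^2$ the ray $x^\prime+tu$ with $t>0$ meets $\partial\Omega$, so $\Pi_{x^\prime}$ is surjective and its image covers $S^2$, giving an ideal coverage ratio equal to $1$. If $x^\prime\notin\Omega$, a supporting hyperplane at the closest boundary point separates $\Omega$ from $x^\prime$, so $\Pi_{x^\prime}(\partial\Omega)$ is contained in a closed hemisphere and the ideal coverage ratio is at most $1/2$. This produces a gap of at least $1/2$ between the two regimes, which is what will eventually make the threshold test unambiguous.

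Next I would quantify the discrete-to-continuous error. Writing $D=\mathrm{dist}(x^\prime,\partial\Omega)$, the central projection is $1/D$-Lipschitz on $\partial\Omega$, so projecting $P_o$ through $\Pi_{x^\prime}$ yields an $(\epsilon^*/D)$-net on $\Pi_{x^\prime}(\partial\Omega)$, where $\epsilon^*$ is the sampling radius in Eq.~(\ref{eq.epsilon_th}). The sinusoidal projection preserves area up to the cartographic distortion analysed in \cite{seong2002sinusoidal}, and standard Delaunay estimates show that the area of the union of triangles spanned by an $(\epsilon^*/D)$-net differs from the area of its underlying region by a term of order $\epsilon^*/D$ times the perimeter of that region. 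Hence $|\hat{\beta}_{x^\prime\circ P_o}-\beta_{x^\prime\circ P_o}|\le C\,\epsilon^*/D$ for some constant $C$ depending on $\partial\Omega$, so $\hat{\beta}_{x^\prime\circ P_o}\to\beta_{x^\prime\circ P_o}$ as $N_o\to\infty$ since $\epsilon^*\to 0$ by the same density argument that underpins Proposition~\ref{prop.topological_smilarity}. Combined with the continuous dichotomy, any cutoff $\beta^*\in(1/2,1)$ eventually yields the correct interior/exterior decision, and one may push $\beta^*\to 1$ at the rate at which $\epsilon^*\to 0$, matching the claimed asymptotic in the proposition.

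The main obstacle will be uniformity of the error bound in $x^\prime$. For $x^\prime$ close to $\partial\Omega$ the distance $D$ shrinks and the Lipschitz constant $1/D$ blows up, so $C\,\epsilon^*/D$ can exceed the gap of $1/2$ even for moderately large $N_o$. I would handle this by either restricting the statement to an interior tube $\{x^\prime:\mathrm{dist}(x^\prime,\partial\Omega)\ge c\,\epsilon^*\}$ whose complement has vanishing measure as $N_o\to\infty$, or by replacing the naive Lipschitz bound by one controlled by the local reach of $\partial\Omega$ in the spirit of Niyogi et al.~\cite{niyogi2008finding}. A secondary technical nuisance is the pole singularity of the sinusoidal projection, which I would sidestep by re-centering the projection at the approximate centroid of the projected samples before triangulating, so that the triangulated region stays well away from the coordinate poles.
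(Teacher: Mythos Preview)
Your proposal is correct and considerably more thorough than the paper's own argument. The paper's proof is brief and informal: it simply observes that as $N_o\to\infty$ the maximal triangle area $a^*\to 0$, so the triangulated area $\hat{S}_{x^\prime,P_o}$ converges to the true covered area $S_{x^\prime,P_o}$, and for a bounded point this equals $4\pi r^2$, whence $\beta^*\to 1$. It does not explicitly invoke the convexity hypothesis, does not establish what happens for exterior points, and gives no quantitative rate.

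Your approach differs in that you actually use convexity twice: first to show that an interior point projects $\partial\Omega$ onto all of $S^2$, and second---via a supporting hyperplane---to confine the projection of an exterior point to a closed hemisphere, yielding the clean $1$ versus $\le 1/2$ dichotomy. You then control the discretisation error by the Lipschitz bound $1/D$ on the central projection together with the net density $\epsilon^*$, and you flag the genuine non-uniformity near $\partial\Omega$ and the pole artefact of the sinusoidal chart. What this buys you is an argument that explains \emph{why} convexity is assumed, gives a principled range for the threshold (any $\beta^*\in(1/2,1)$ works asymptotically), and identifies where the approximation can fail in practice. What the paper's version buys is brevity suited to an engineering venue; if you only need to match that level, your first paragraph (the continuous dichotomy) together with a one-line appeal to $a^*\to 0$ would already suffice.
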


\begin{proof}
As illustrated in Fig.~\ref{fig.tri_sphere}, let $ a_i $ denote the area of each triangle formed by a point $ x^\prime $ and the point set $ P_o $ (Definition~\ref{def.boundedness_pts}). Let $ a^* = \sup a_i $ represent the maximum area among all such triangles. When $ N \to \infty $, we have $ a^* \to 0 $ and $ \hat{S}_{x^\prime, P_o} \to S_{x^\prime, P_o} $, where $ \hat{S}_{x^\prime, P_o} $ is the approximate area and $ S_{x^\prime, P_o} $ is the exact area. 

If $ x^\prime $ is completely bounded by $ \partial \Omega $, it follows that $ \hat{S}_{x^\prime, P_o} \to S_{x^\prime, P_o} = 4\pi r^2 $, the total surface area of the unit sphere. Consequently, $ \beta^* \to 1 $ as $ \hat{S}_{x^\prime, P_o} \to 4\pi r^2 $.
\end{proof}

\begin{figure}[t!]
\centering
    \includegraphics[width=0.2\textwidth]{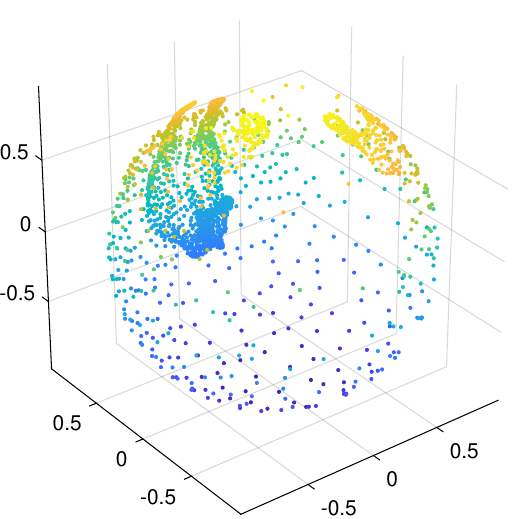}
    \caption{Distance vectors projected to a ball surface.}
    \label{fig.bound_proj}
\end{figure}

\begin{figure}[t!]
\centering
    \subfloat[Bounded point]{\includegraphics[width=0.16\textwidth]{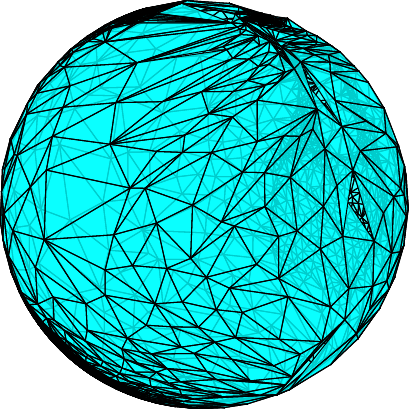}}
    \hspace{0.01\textwidth}
    \subfloat[Unbounded point]{\includegraphics[width=0.16\textwidth]{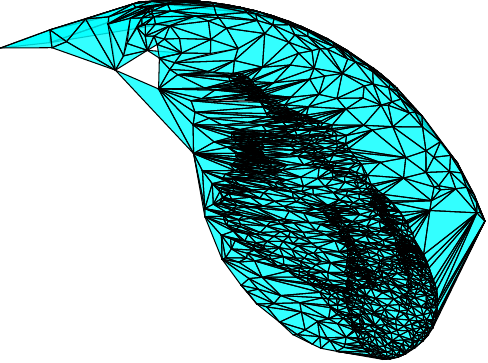}}\\
    \caption{Triangulation of the projected points on sphere surface of a point bounded and a point unbounded by a point cloud shape respectively. The tested point cloud is of a horse shape.}
    \label{fig.tri_sphere}
\end{figure}

\begin{figure}[t!]
\centering
    \subfloat[Colored by Boundedness ]{\includegraphics[width=0.23\textwidth]{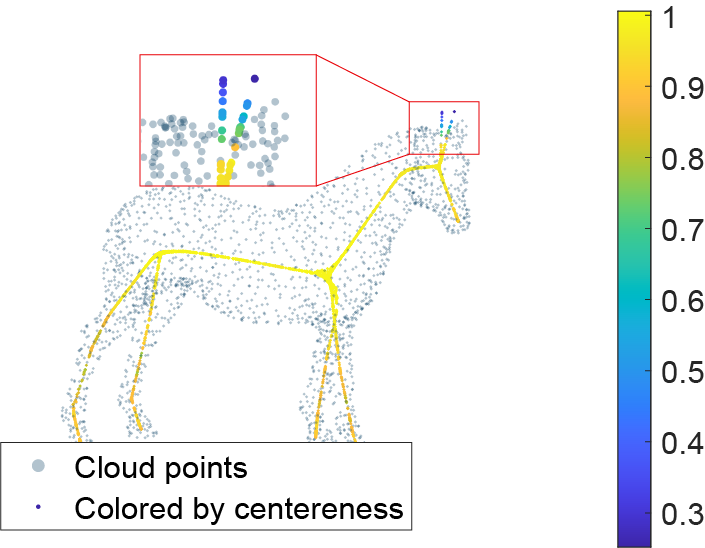}}
    \hspace{0.01\textwidth}
    \subfloat[Boundedness Distribution]{\includegraphics[width=0.23\textwidth]{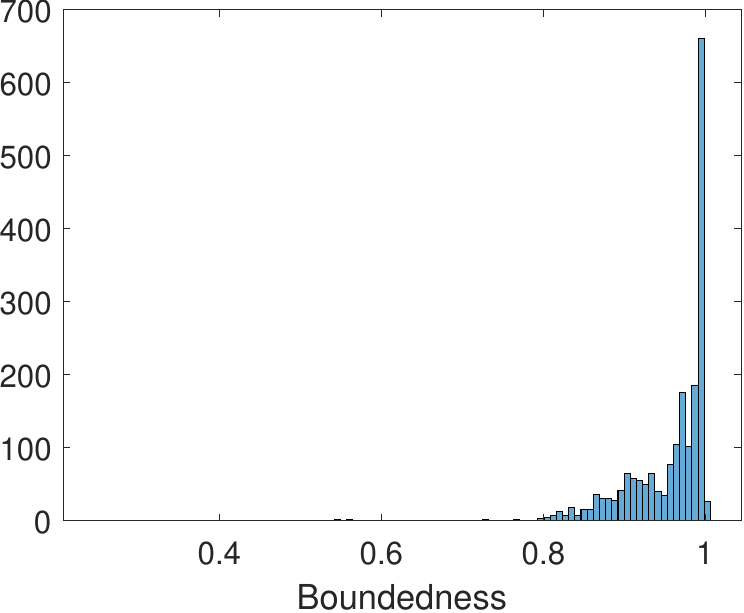}}\\
    \caption{Boundedness of contracted horse skeletal points}
    \label{fig.horse_boundedness}
\end{figure}

In practice, as shown in Fig.~\ref{fig.horse_boundedness}, the point density is often not as dense as theoretically expected. Consequently, $ \beta^* < 1 $ is the typical value observed in practical scenarios. Furthermore, it is demonstrated that the points of the horse shape lying outside the boundary exhibit smaller boundedness values, whereas the points within the boundary have boundedness values that are very close to 1.

\subsection{Centeredness}
Centeredness is the most critical characteristic for point cloud skeletonization, as it is the key feature for application. However, quantifying this feature is quite challenging since the object surface represented by a point cloud is usually unknown in practice and the skeleton definitions are not unified in previous studies. Addressing the challenges, this subsection proposes a metric for evaluating the centeredness of skeletonization results based on both the definition of surface skeletons and curve skeletons.

\subsubsection{Problem Statement \& Preliminaries}
Multiple definitions have been proposed for desirable skeletons due to differing interpretations of skeletonization. However, centeredness, widely regarded as a key feature, lacks a quantitative definition~\cite{cornea_curve-skeleton_2007, sobiecki2014comparison}. Moreover, no commonly accepted definition of centeredness exists~\cite{tagliasacchi20163d}. The most widely accepted characteristic of skeleton centeredness is that both surface skeletons and curve skeletons are expected to approximate the medial axis of the shape~\cite{palagyi2006quantitative, giesen2009scale, sobiecki2014comparison, telea2012computing}.

According to previous studies~\cite{cornea_curve-skeleton_2007,sobiecki2014comparison}, the strict definition of centeredness of both the skeletal point set $P_s$ and curve skeleton $G_s$ of a shape $\Omega$ can be as follows. A skeletal point set $ P_s =\{ \mathbf{p}_{s,i} \mid \mathbf{p}_{s,i} \in \mathbb{R}^3 \}$ corresponding to the shape $\Omega$ is expected to have each skeletal point $ \mathbf{p}_{s,i} $ lie on the medial axis $ \mathcal{M}_\Omega $ will be defined by (Eq.~\ref{eq.medial_axis}). Similarly, a curve skeleton $ G_s $ is represented as a graph structure $ (V, E) $ corresponding to the shape $\Omega$. Both the vertices $ \mathbf{v}_i \in V $ and every point $ e_i(t) \in E $ along the edges are expected to lie on the medial axis $ \mathcal{M}_\Omega $ of the shape $\Omega$, where $ t $ is the parametrization of the edge.

It is noteworthy that exact centeredness is not always necessary since it may center in some dimensions but not in others and the requirement for centeredness varies in application scenarios. For example, approximate centeredness is acceptable for virtual navigation, as the medial axis may contain excessive and unnecessary details~\cite{cornea_curve-skeleton_2007}. Moreover, the above definition assumes a known shape surface, which is not available for point cloud data. To base our metric, we have the following definitions of medial axis for known shape surfaces before extending to our numerical definition and approximation method of centeredness for point cloud shape.
\begin{definition}
    \label{def.medial_axis}
Given a shape $\Omega\subset \mathbb{R}^3$ with boundary $\Omega$, the distance transform of the shape ${\rm DT}_{\partial\Omega}:\mathbb{R}^3\to\mathbb{R}_{>0}$ are defined as
\[
    {\rm DT}_{\partial\Omega}(\mathbf{p}_x\in\Omega)=\min_{\mathbf{p}_y\in\partial\Omega}{\|\mathbf{p}_x-\mathbf{p}_y\|}
\]
\end{definition}

\begin{definition}
The medial axis or medial axis surface $S_\Omega$ of a shape $\Omega$ is a surface within the shape surface and constrained by the distance transform, given by
    \begin{equation}
        \begin{aligned}
              \mathcal{M}_\Omega& = \{ \mathbf{p}_x \in \Omega \mid \exists \mathbf{m}_1, \mathbf{m}_2 \in \partial \Omega, \mathbf{m}_1 \neq \mathbf{m}_2,\\
              &\|\mathbf{p}_x - \mathbf{m}_1\| = \|\mathbf{p}_x - \mathbf{m}_2\| \}.
        \end{aligned}
        \label{eq.medial_axis}
    \end{equation}
\end{definition}

\subsubsection{Methodology}
Since the surface of a shape captured by a real-scanned point cloud is usually uncertain and the point cloud is always with noises, it is hard to quantify the exact centeredness defined with the medial axis (Definition~\ref{def.medial_axis}). However, it is possible to estimate how close the resultant skeleton is to the medial surface with the assumption that the opposite points in the shape section are gathered together with the effect of expected skeletonization.

For skeletal point sets, centeredness can be measured under the assumption that each point in the skeletal point set $P_s$ has a corresponding match in the original point cloud $P_o$. Note that if a direct point-wise correspondence is unavailable, it can be approximated through reverse projection or by mapping the skeleton onto a potential neighborhood of object cloud points.
\begin{definition}
     Let $\mathbf{p}_{s,i}\in P_s$ be a point of $P_s$. The $k$ nearest neighbor points of $\mathbf{p}_{s,i}$ are computed by K-Nearest Neighbors (KNN) Algorithm and denoted as $\{\mathbf{p}_{s,i\odot j}\}$, where $j=1,\cdots,k$. Let $\varphi$ be the mapping from a point $\mathbf{p}_{s,i}$ of $P_s$ to its corresponding point $\mathbf{p}_{o,i}$ of $P_o$. The centeredness of a skeletal point $\mathbf{p}_{s,i}$, which represents how centrally it is positioned relative to the overall shape of the point cloud, is given by
    \begin{equation}
        c(\mathbf{p}_{s,i})\coloneqq1-\frac{\|\sum_{j=1}^{k}{\varphi(\mathbf{p}_{s,i\odot j})}-\sum_{j=1}^{k} \mathbf{p}_{s,i\odot j}\|}{\sum_{j=1}^{k}{\|\varphi(\mathbf{p}_{s,i\odot j})-1/k\cdot\sum_{j=1}^{k}{\mathbf{p}_{s,i\odot j}}\|}}
        \label{eq.skeletal_point_centeredness}
    \end{equation}
\label{def.point_centeredness}
\end{definition}

\begin{rem}
As illustrated in Fig.~\ref{fig.skeleton_centeredness}, the centeredness of skeletal points given by Eq. (\ref{eq.skeletal_point_centeredness}) generated by the Laplacian-based skeletonization method~\cite{cao_point_2010} indicates that points moved toward the center exhibit higher centeredness values. However, this method fails to adequately address points representing the joints and endpoints of the skeleton. Furthermore, the centeredness calculated using this method becomes meaningless for points that move outside the shape, such as the skeletal points representing the horse's ears in Fig.~\ref{fig.skeleton_centeredness}(a).
\end{rem}

\begin{figure}[t!]
\centering
    \subfloat[Skeletal points]{\includegraphics[width=0.23\textwidth]{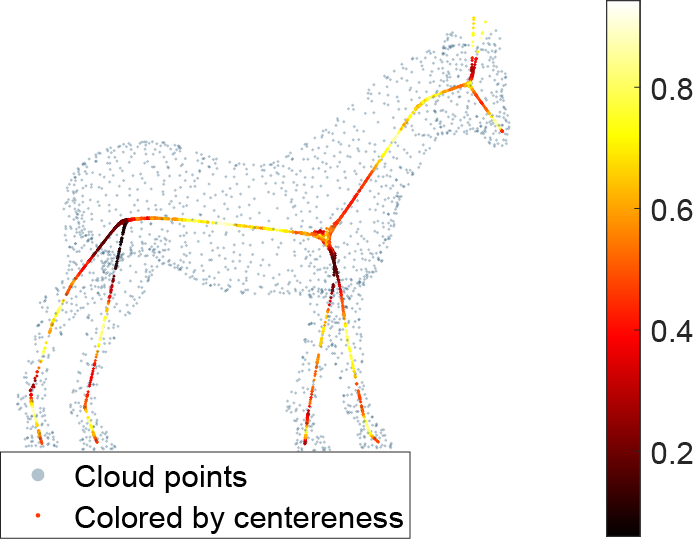}}
    \hspace{0.01\textwidth}
    \subfloat[Points on the curve]{\includegraphics[width=0.23\textwidth]{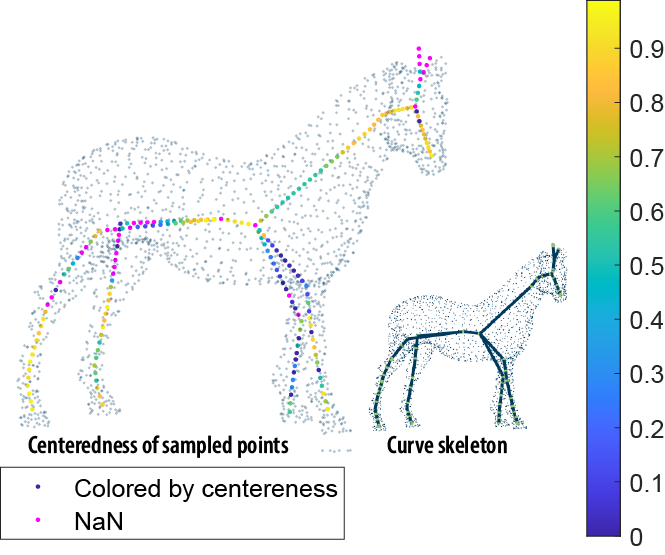}}\\
    \caption{Centeredness of skeletal points (a) and the points sampled from curve (b) respectively.}
    \label{fig.skeleton_centeredness}
\end{figure}

Because the physical and geometric properties of a curve skeleton (curve represented by continuously connected vertices rather than a set of discrete points) are different from surface skeleton, we consider for a curve skeleton $ G_s = (V, E) $, the centeredness of a point on the curve is determined by its associated original cloud points, which are the points in the original cloud near the curve point in same space. As illustrated in Fig.~\ref{fig.shape_cutting}, the original cloud points can be separated by two cutting planes $ M_1 $ and $ M_2 $, provided the planes have the same normal vector and the normal vector aligns with the curve's direction.
\begin{definition}
\label{def.associated_points}
    Consider two neighboring edges $ e_1 = (\mathbf{v}_0, \mathbf{v}_1) $ and $ e_2 = (\mathbf{v}_1, \mathbf{v}_2) $ in the graph $ G_s $, where $ e_1, e_2 \in E $ and $ \mathbf{v}_0, \mathbf{v}_1, \mathbf{v}_2 \in V $.

    Let $ \mathbf{p}_{g,1} = e_1(t_1) $ be a point on the edge $ e_1 $, where $ t_1 \in [0, 1] $ (Definition~\ref{def.curve_skeleton_point}). The direction vector of the curve at $ \mathbf{p}_{g,1} $ is denoted as $\mathbf{u}_{1}$ and determined as follows:
    \begin{enumerate}
        \item For $ t_1 \in (0, 1) $: $\mathbf{u}_{1}$ is given by  
        \[
            \mathbf{u}_{1}=\frac{\mathbf{v}_1 - \mathbf{v}_0}{\|\mathbf{v}_1 - \mathbf{v}_0\|}.
        \]
        \item For $ t_1 = 1 $: $\mathbf{u}_{1}$ is approximated by the tangent direction at $ \mathbf{v}_1 $, computed as the tangent to the circle determined by the coordinates of $ \mathbf{v}_0 $, $ \mathbf{v}_1 $, and $ \mathbf{v}_2 $.
    \end{enumerate}
    The associated points $ Q_{1} \subset P_o $ corresponding to $\mathbf{p}_{g,1}$ are defined as the points enclosed by two parallel cutting planes. The orientation and position of these planes are determined by the curve direction and the coordinates of $\mathbf{p}_{g,1}$, with an interval value $\epsilon_p$ between them.
\end{definition}

With the approximation of the direction of a point on the curve and given interval $\epsilon_p$ between the planes, the two cutting planes are confirmed and can be used to separate points from the original point cloud, as shown in Fig.~\ref{fig.shape_cutting}. The value of $\epsilon_p=\alpha\inf\|\mathbf{v}_i-\mathbf{v}_j\|, \alpha\in(0,1)$ is dependent on the minimum value of two neighboring vertices of $G_s$.

After retrieving the associated points of a point $\mathbf{p}_{g,i}$ of curve skeleton $G_s$ from original point cloud, the center of those associated points is required for estimating the centeredness of $\mathbf{p}_{g,i}$.

\begin{definition}
    Let $Q_{i}\subset P_o\subset\mathbb{R}^{n\times3}$ be the associated points (Definition~\ref{def.associated_points}) of $\mathbf{p}_{g,i}$, where $\mathbf{p}_{g,i}\in G_s$ is a point sampled from curve skeleton $G_s$ (Definition~~\ref{def.curve_skeleton_point}). Two orthogonal basis perpendicular to $\mathbf{u}_{i}$ (Definition~\ref{def.associated_points}), the direction of $G_s$ at $\mathbf{p}_{g,i}$ are denoted as $\mathbf{g}_{i},\mathbf{h}_{i}$ respectively. The projected 2D points $\hat{Q}_{i}$ is given by
    \[
        \hat{Q}_{i}=Q_i\cdot[\mathbf{g}_{i},\mathbf{h}_{i}].
    \]
    Correspondingly, the projected $\mathbf{p}_{g,i}$ is given by $\hat{\mathbf{p}}_{g,i}=\mathbf{p}_{g,i}\cdot[\mathbf{g}_{i},\mathbf{h}_{i}]$. The center of the fitted ellipse of the projected points in $\hat{Q}_{i}$ is denoted by $\hat{\mathbf{q}}_c$. The centeredness of $\mathbf{p}_{g,i}$ is then given by
    \begin{equation}
        \mathfrak{c}(\mathbf{p}_{g,i})\coloneqq1-\frac{\|\hat{\mathbf{p}}_{g,i}-\hat{\mathbf{q}}_c\|}{0.5\cdot(l_a+l_b)}
        \label{eq.curve_point_centeredness}
    \end{equation}
    where $l_a,l_b$ are the length of the semi-major axis and the semi-minor axis of the fitted ellipse respectively. To make sure $\mathbf{p}_{g,i}$ is non-negative, if $\mathfrak{c}(\mathbf{p}_{g,i})<0$, $\mathfrak{c}(\mathbf{p}_{g,i})$ will be reassigned as 0.
\end{definition}

\begin{rem}
    Inspired by the work of Fitzgibbon et al.~\cite{fitzgibbon1999direct}, the center of the projected points $\hat{Q}_{i}$ can be robustly estimated using ellipse fitting. As shown in Fig.~\ref{fig.shape_cutting}, the center estimated via ellipse fitting is closer to human visual estimation than the barycenter obtained by averaging the positions of all points. Experimental results on simple geometries further confirm that ellipse fitting provides a more robust center estimation, as illustrated in Fig.~\ref{fig.ellipse_fitting}, where the geometrical centers of 2D point sets from multiple shapes are shown.

    As illustrated in Fig.~\ref{fig.skeleton_centeredness}(b), points are evenly sampled from the curve skeleton, and centeredness is calculated and visualized using color. Not every point on the curve has a valid centeredness value. Points without valid centeredness values are marked in magenta, including those coinciding with joint vertices of the curve skeleton and points with fewer than three associated points. A joint vertex in a curve skeleton $G_s$ is defined as a vertex connected to more than two edges.
\end{rem}

\begin{figure}[t!]
\centering
    \includegraphics[width=0.35\textwidth]{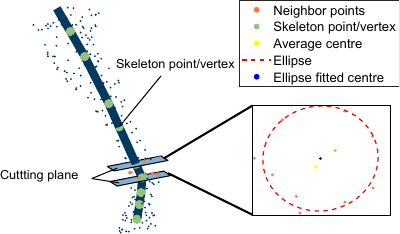}
    \caption{Separated points (orange) by two cutting planes.}
    \label{fig.shape_cutting}
\end{figure}

\begin{definition}
\label{def.overall_centeredness}
    The overall quantified centeredness of either the skeletal points or the curve skeleton is defined as the ratio of elements that are sufficiently centered. A skeletal point $\mathbf{p}_{s,i} \in P_s$ is considered sufficiently centered if its centeredness value $c(\mathbf{p}_{s,i})$, as given in (\ref{eq.skeletal_point_centeredness}), satisfies $c(\mathbf{p}_{s,i}) > c^*$. Similarly, a sampled point $\mathbf{q}_{e}$ of the curve skeleton $G_s$ is sufficiently centered if its centeredness value $\mathfrak{c}(\mathbf{q}_{e})$, as defined in (\ref{eq.curve_point_centeredness}), satisfies $\mathfrak{c}(\mathbf{q}_{e}) > \mathfrak{c}^*$. 

    Here, the threshold values $c^*$ and $\mathfrak{c}^*$ depend on the required centeredness in the specific application scenario, which is discussed in \ref{RoboticDiscussion}. By considering $N_{s,c}$ denotes the number of sufficiently centered points in $P_s$ or the number of sufficiently centered sampled points in $G_s$, and let $N_s$ denote the total number of accounted points. The overall centeredness of $P_s$ or $G_s$ is then given by:
    \begin{equation}
        \mathcal{C}_s\coloneqq \frac{N_{s,c}}{N_s}.
        \label{eq.overal_centeredness}
    \end{equation}
\end{definition}

Please note, the curve skeleton centeredness definition may also be applicable for skeletal point set if the skeletal point set is of a very thin line-like structure since in this case the principal direction of the skeletal point skeleton at a point on of the skeleton can be estimated by this point's neighbor points by Principal Component Analysis, as explained in Proposition \ref{prop.point_tangent}.

\begin{figure}[t!]
\centering
        \includegraphics[width=0.4\textwidth]{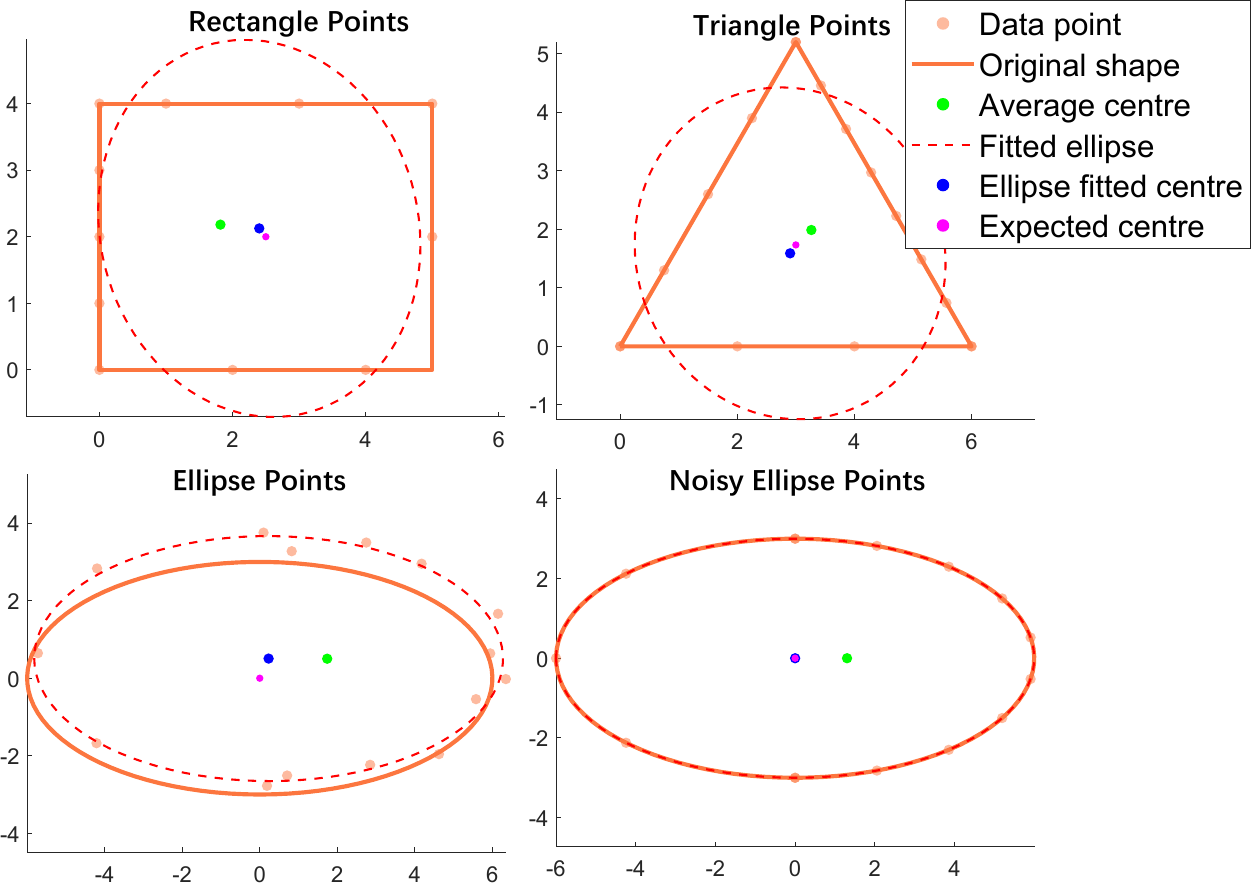}
    \caption{Ellipse fitting of the points of simple geometries.}
    \label{fig.ellipse_fitting}
\end{figure}






\subsection{Smoothness}
The smoothness of the curve skeleton is defined by the variation of the tangent direction along the curve~\cite{cornea_curve-skeleton_2007}, and this property significantly affects application scenarios such as navigation. In the work of Sobiecki et al.~\cite{sobiecki2014comparison}, it is further emphasized that both the manifold represented by the skeletal point set and the curve skeleton are expected to exhibit at least $C^2$ continuity, with curvature continuity being a desirable property. Inspired by~\cite{cornea_curve-skeleton_2007}, this subsection presents a refined mathematical definition of the smoothness metric, designed to ensure continuous differentiation. This refinement holds potential significance in path-planning problems that will be further discussed in Section \ref{RoboticDiscussion}.

\subsubsection{Problem Statement}
Since the smoothness of the skeleton affects the most on navigation and motion planning problems, in which scenario the direction change is of great importance, our problem focuses on the direction change along the curved branches of the skeleton. With the assumption that all skeletonization results, either the curve skeleton $G_s=(V,E)$ or the skeletal points $P_s$ is expected to be very thin line-like shape.

\subsubsection{Methodology}
Since the curved path of the skeleton at a given point in a point cloud model is determined by its tangent vector, and smoothness evaluation relies on measuring variations in this vector, our proposed metric addresses smoothness in three steps. First, the tangent vector is approximated for the skeletal representation, whether as a skeletal point set or a curve skeleton. Second, changes in the tangent direction along the skeleton are measured. Finally, a smoothness score is assigned based on these variations.

First, we address the estimation of the tangent vector. Given a curve skeleton $G_s=(V,E)$, the curve direction change only happens at the vertices and the tangent estimation along the curve skeleton shares with the definition in centeredness as stated in Definition~\ref{def.associated_points}. While for the curve-like skeletal points, as illustrated in Fig.~\ref{fig.bound_proj}(a), the tangent vector along the curve represented by discrete points can be estimated by the direction of the principal axis obtained through principal component analysis, as explained in Proposition~\ref{prop.point_tangent}.

\begin{prop}  
\label{prop.point_tangent}  
Given a skeletal point set $ P_s \subset \mathbb{R}^{n \times 3} $, the $ k $ nearest neighbors of a point $ \mathbf{p}_{s,i} = [x_{s,i}, y_{s,i}, z_{s,i}]^T \in \mathbb{R}^3 $ are denoted by $ \{\mathbf{p}_{s,i\odot j}\} $, where $ \mathbf{p}_{s,i}, \mathbf{p}_{s,i\odot j} \in P_s, j = 1, \dots, k $.  
The covariance matrix of these neighboring points is given by~\cite{kim2023occlusion}  
\begin{equation*}  
     \mathbf{C}_i =\frac{1}{k}\sum_{j=1}^{k}(\mathbf{p}_{s,i\odot j}-\mathbf{p}_{s,i})(\mathbf{p}_{s,i\odot j}-\mathbf{p}_{s,i})^T.  
\end{equation*}  
The tangent vector at $ \mathbf{p}_{s,i} $, denoted as $ \mathbf{t}_{\mathbf{p}_{s,i}} $, is the eigenvector corresponding to the largest eigenvalue of $ \mathbf{C}_i $.  
\end{prop}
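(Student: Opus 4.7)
The plan is to argue that, under the local curve-like assumption stated in the preamble of the Smoothness subsection, the matrix $\mathbf{C}_i$ is (to leading order) a rank-one perturbation aligned with the tangent of the underlying curve, so that its dominant eigenvector recovers the tangent direction. I would proceed by locally modeling the skeletal point set as samples from a smooth curve $\gamma:[-\delta,\delta]\to\mathbb{R}^3$ parameterized by arc length with $\gamma(0)=\mathbf{p}_{s,i}$, $\gamma'(0)=\mathbf{t}_{\mathbf{p}_{s,i}}$ a unit vector, and $\gamma''(0)=\kappa\,\mathbf{n}$ with $\mathbf{n}\perp \mathbf{t}_{\mathbf{p}_{s,i}}$. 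Each neighbor $\mathbf{p}_{s,i\odot j}$ is then associated with a small arc-length parameter $s_j$ with $|s_j|\le \delta$.

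First, I would expand each neighbor using a second-order Taylor expansion,
\begin{equation*}
\mathbf{p}_{s,i\odot j}-\mathbf{p}_{s,i}=s_j\,\mathbf{t}_{\mathbf{p}_{s,i}}+\tfrac{s_j^{2}}{2}\kappa\,\mathbf{n}+O(s_j^{3}),
\end{equation*}
and substitute this into the definition of $\mathbf{C}_i$. Collecting terms, the leading-order contribution is
\begin{equation*}
\mathbf{C}_i=\Bigl(\tfrac{1}{k}\sum_{j=1}^{k}s_j^{2}\Bigr)\mathbf{t}_{\mathbf{p}_{s,i}}\mathbf{t}_{\mathbf{p}_{s,i}}^{\!\top}+O(\delta^{3})\,\mathbf{R},
\end{equation*}
where $\mathbf{R}$ is a bounded symmetric matrix coming from the curvature and cubic remainders. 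Writing $\lambda_\parallel=\tfrac{1}{k}\sum_j s_j^{2}=\Theta(\delta^{2})$ and noting that the perpendicular part of the remainder scales as $\Theta(\delta^{4})$ (since it involves $s_j^{4}\kappa^{2}$), one obtains a spectral gap $\lambda_\parallel-\lambda_\perp=\Theta(\delta^{2})$ for sufficiently small neighborhoods.

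Next, I would invoke a standard eigenvector perturbation result (Davis–Kahan-type bound) to conclude that the eigenvector associated with the largest eigenvalue of $\mathbf{C}_i$ deviates from $\mathbf{t}_{\mathbf{p}_{s,i}}$ by an angle of order $\delta^{2}$, and hence coincides with the tangent in the limit $\delta\to 0$ (equivalently, as the neighborhood becomes sufficiently dense and local). A brief remark would handle the subtlety that $\mathbf{C}_i$ is centered at $\mathbf{p}_{s,i}$ rather than the centroid of the neighbors: the discrepancy contributes a rank-one correction of order $\delta^{2}\,\mathbf{n}\mathbf{n}^{\!\top}$, which is absorbed into the same $O(\delta^{3})$ remainder and does not destroy the spectral gap.

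The hard part is making the ``curve-like'' assumption precise enough to guarantee the required spectral gap, since the proposition is stated informally (no explicit smoothness, density, or noise hypotheses). I would therefore state the conclusion as an asymptotic recovery result: under the standing assumption that the skeletal points locally sample a $C^{2}$ curve with strictly positive sampling density and neighborhood radius $\delta\to 0$, PCA's dominant eigenvector converges to the curve's tangent. Handling real point clouds with noise would be deferred to a remark, observing that additive isotropic noise inflates all eigenvalues uniformly and so preserves the identity of the dominant eigenvector whenever the tangential signal $\lambda_\parallel$ exceeds the noise variance.
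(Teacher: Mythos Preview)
Your proposal is correct and follows the same underlying idea as the paper---that when the $k$ neighbors lie approximately on a short line segment, the dominant eigenvector of $\mathbf{C}_i$ aligns with the segment (tangent) direction---but you develop this idea with far more care than the paper does. The paper's proof is a two-sentence heuristic: it simply asserts that if the neighbors form a short segment, the covariance matrix ``captures its local structure,'' and that as the segment approaches a straight line the top eigenvector aligns with the tangent. No Taylor expansion, no spectral gap, no perturbation bound, no treatment of the centering discrepancy or noise. Your Taylor-expansion-plus-Davis--Kahan argument is the natural way to make that heuristic rigorous, and it correctly identifies the scalings ($\lambda_\parallel=\Theta(\delta^2)$ versus $\lambda_\perp=\Theta(\delta^4)$) that justify the claim asymptotically. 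In short: same approach, but your version actually constitutes a proof, whereas the paper's is closer to a restatement of the proposition.
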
  

\begin{proof}  
If the $ k $ discrete neighboring points form a shape of short-segment, the covariance matrix $ \mathbf{C}_i $ captures its local structure. As the segment approaches a straight line, the eigenvector corresponding to the largest eigenvalue of $ \mathbf{C}_i $ aligns with the curve’s tangent direction.  
\end{proof}

Secondly, the variance of the tangent vector can be measured using cosine similarity with normalization, as given by~\cite{xia2015learning}:
\begin{equation} \label{eq.similarity} D_{n}(\mathbf t_1, \mathbf t_2)= \frac{1}{\pi} \arccos\left( \frac{\mathbf t_1 \cdot\mathbf t_2 }{\Vert \mathbf t_1 \Vert\Vert \mathbf t_2 \Vert} \right). \end{equation}
Here, $D_{n}$ quantifies the variance between two tangent vectors between two tangent vector $\mathbf t_1,\mathbf t_2$. The smoothness metric of a point in a skeletal point set and a vertex in a curve skeleton is defined in Definition~\ref{def.point_sm}.

 \begin{figure}[t!]
\centering
    \subfloat[$S_A=0.981\\S_B=0.988$]{\includegraphics[width=0.15\textwidth]{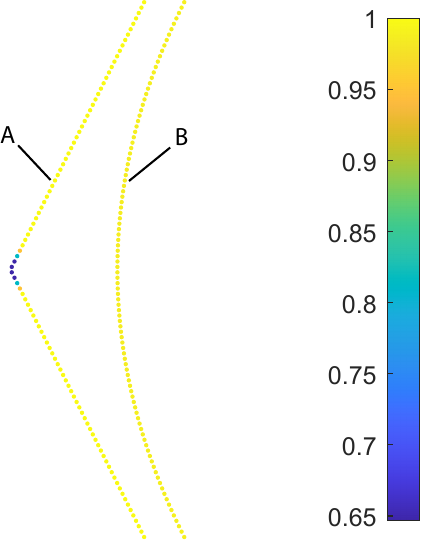}}
    \hspace{0.05\textwidth}
    \subfloat[ $\mathfrak{S}_A=0.941\\\mathfrak{S}_B=0.943$]{\includegraphics[width=0.15\textwidth]{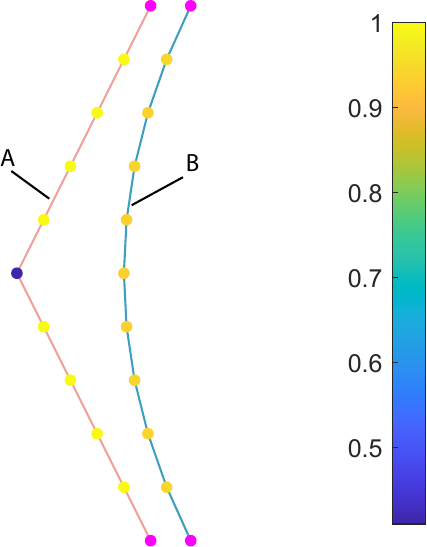}}\\
    \caption{Smoothness of 2D skeletal points (a) and 2D curve skeleton (b) in different smoothness. The vertices in (b) without valid smoothness value are marked in magenta.}
    \label{fig.skeleton_smoothness}
\end{figure}
\begin{definition}
    For a point $ \mathbf{p}_{s,i} \in P_s $ of the skeletal point set $ P_s $, the $ m $ neighbor points of $ \mathbf{p}_{s,i} $ are denoted as $ \{\mathbf{p}_{s,i \odot j}\} $. The smoothness of $ P_s $ at the point $ \mathbf{p}_{s,i} $ is defined as
    \begin{equation}
        s(\mathbf{p}_{s,i}) \coloneqq \min_{j=1,\dots,m} \lvert 1- 2\cdot{ D_n(\mathbf{t}_{\mathbf{p}_{s,i}}, \mathbf{t}_{\mathbf{p}_{s,i \odot j}}) }  \rvert,
        \label{eq.skeletal_point_smoothness}
    \end{equation}
    where $ s(\mathbf{p}_{s,i}) \in [0,1] $, and $ \mathbf{t}_{\mathbf{p}_{s,i}}, \mathbf{t}_{\mathbf{p}_{s,i \odot j}} $ are the tangent vectors at $ \mathbf{p}_{s,i} $ and $ \mathbf{p}_{s,i \odot j} $ respectively, given by Proposition~\ref{prop.point_tangent}.

    Similarly, for a vertex $ \mathbf{v}_i \in V $ of the curve skeleton $ G_s = (V, E) $, let $ N_e(\mathbf{v}_i) $ denote the number of edges connected to $ \mathbf{v}_i $. If $ N_e(\mathbf{v}_i) \neq 2 $, $ \mathbf{v}_i $ is either an endpoint, a joint point, or an isolated point without a tangent change, and the curve smoothness around the vertex is considered the maximum value 1. If $ N_e(\mathbf{v}_i) = 2 $, the smoothness of $ G_s $ at $ \mathbf{v}_i $ is given by 
    \begin{equation}
        \mathfrak{s}(\mathbf{v}_i) \coloneqq \lvert 1-2 \cdot D_n(\mathbf{v}_{i-1} - \mathbf{v}_{i}, \mathbf{v}_{i+1} - \mathbf{v}_i) \rvert,
        \label{eq.curve_point_smoothness}
    \end{equation}
    where $ \mathfrak{s}(\mathbf{v}_i) \in [0,1] $.
    \label{def.point_sm}
\end{definition}
With smoothness defined for each point of the skeletal point set and each vertex of the curve skeleton. The overall smoothness of the skeletal point set and curve skeleton
 is given by Definition~\ref{def.skelel_sm}.
\begin{definition}
    For a skeletal point set $P_s=\{\mathbf{p}_{s,i}\}$, the smoothness of $P_s$ is defined as 
    \begin{equation}
        S(P_s)=\frac{1}{N_s}\sum_{i=1}^{N_s}{s(\mathbf{p}_{s,i})},
        \label{eq.smoothness_skeletal_points}
    \end{equation}
and, for a curve skeleton $G_s=(V,E)$, the smoothness for $G_s$ is given by
    \begin{equation}
        \mathfrak{S}(G_s)=1-\frac{1}{W }\sum_{i=1}^{\hat{N}_v}{w_i\cdot(1-\mathfrak{s}(\hat{v}_i)}),
        \label{eq.smoothness_curve}
    \end{equation}
    where $\hat{v}_i$ denote all the vertices satisfy $N_e(\hat{v}_i)=2$, $w_i$ is the total length of the two half edges about $\hat{v}_i$, $W$ is the total length the all edges such that $e_i\in E$.
    \label{def.skelel_sm}
\end{definition}
\begin{figure*}
  \centering
  \begin{tabular}{ p{1.6cm}  p{2.8cm}  p{2.6cm}  p{2.6cm}  p{2.6cm}  p{2.6cm}}
    \hline
    \textbf{Categories} & \textbf{Skeletonization Results} & \textbf{Topological Similarity} & \textbf{Boundedness} & \textbf{Centeredness} & \textbf{Smoothness} \\ \hline\hline
    4096 pts
    & \includegraphics[width=0.13\textwidth]{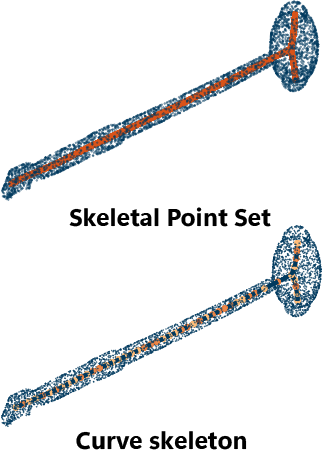} 
    & \includegraphics[width=0.13\textwidth]{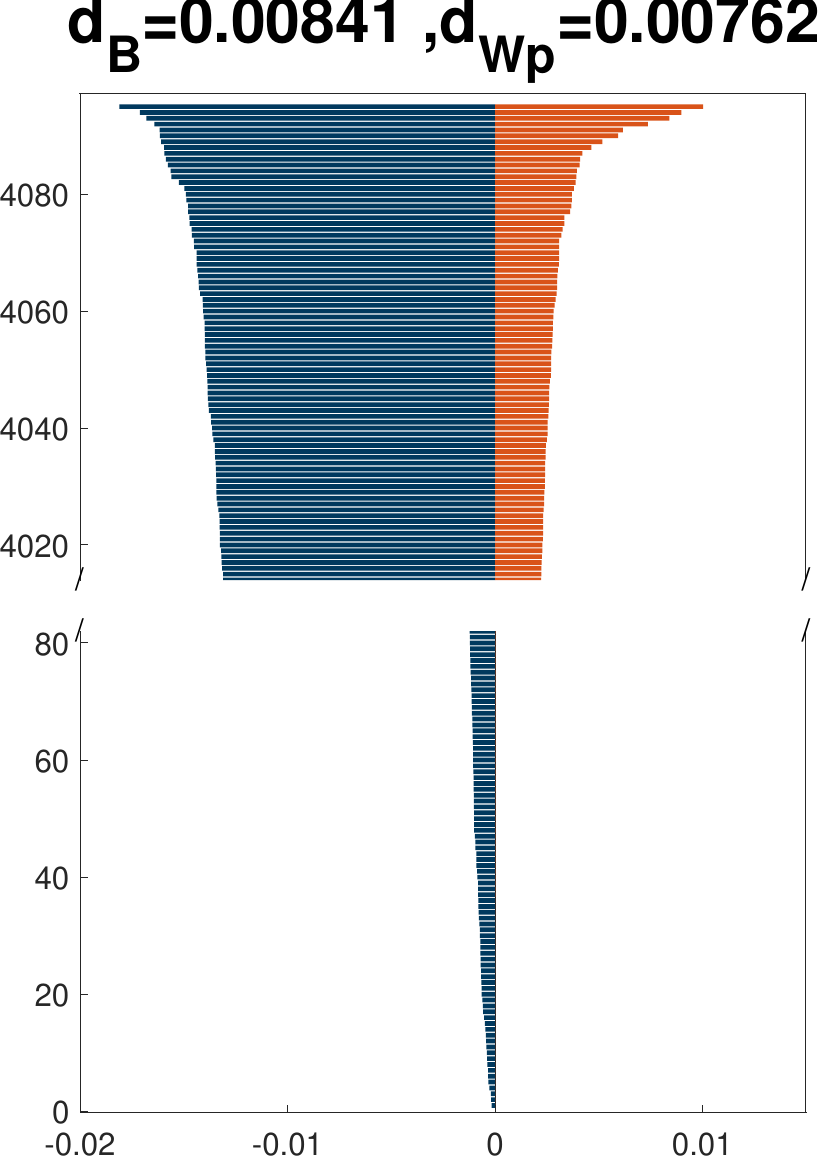}
    & \includegraphics[width=0.13\textwidth]{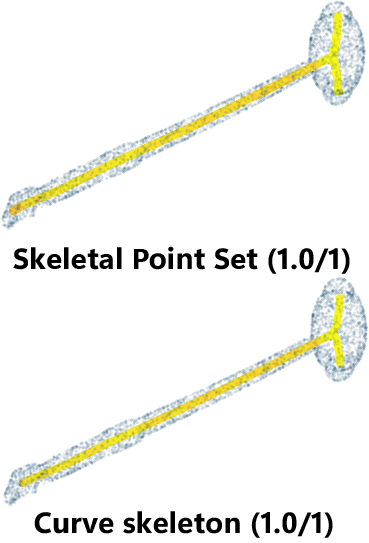} 
    & \includegraphics[width=0.13\textwidth]{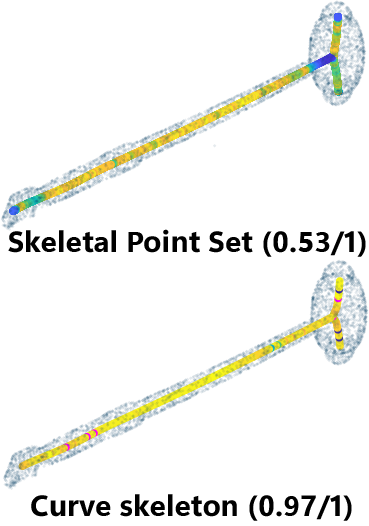}
    & \includegraphics[width=0.13\textwidth]{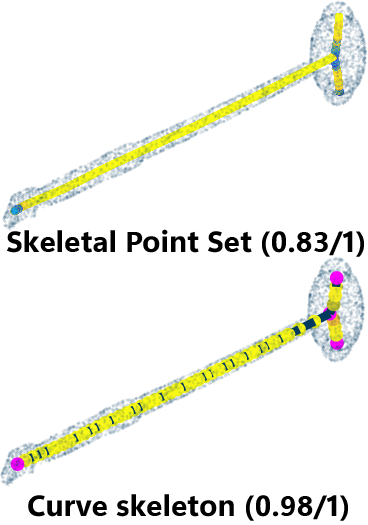} 
    \\ \cline{1-6}
    5\% noise
    & \includegraphics[width=0.13\textwidth]{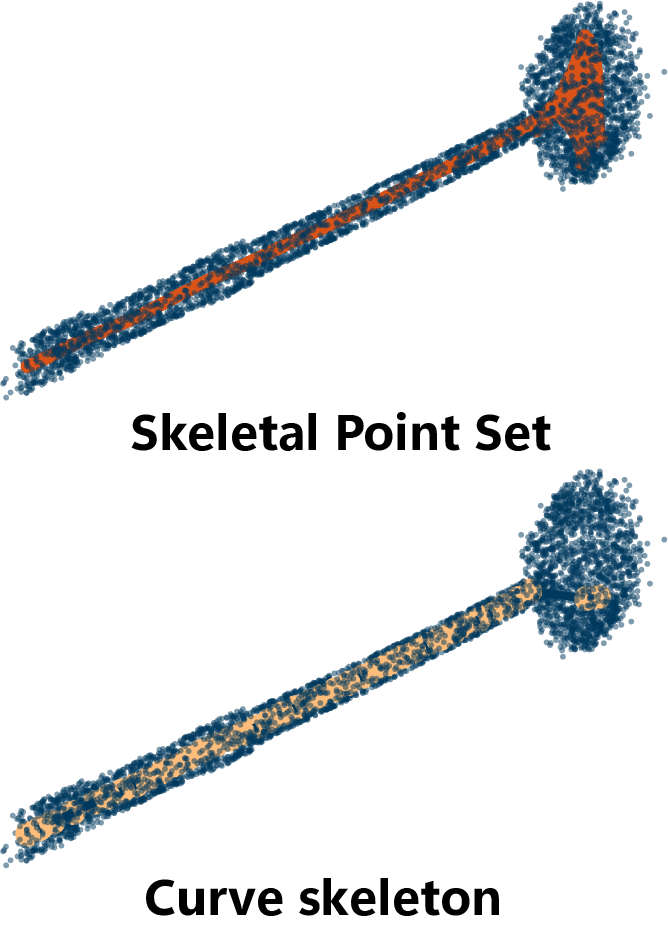}
    & \includegraphics[width=0.13\textwidth]{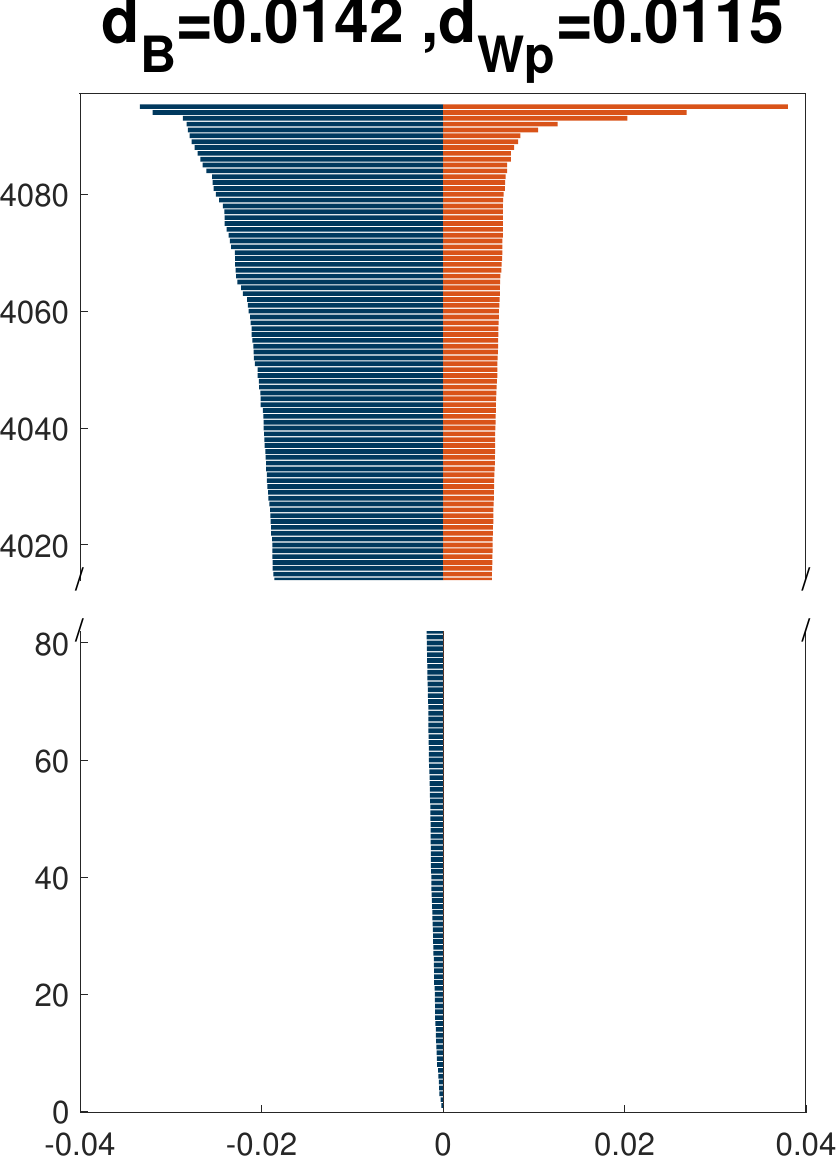}
    & \includegraphics[width=0.13\textwidth]{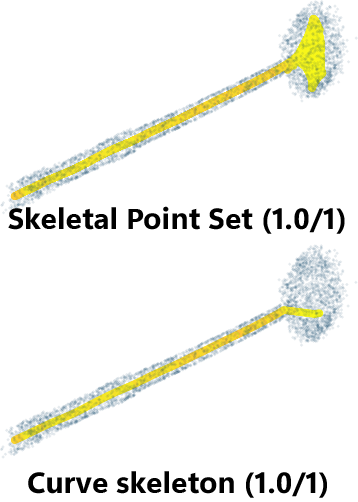}
    & \includegraphics[width=0.13\textwidth]{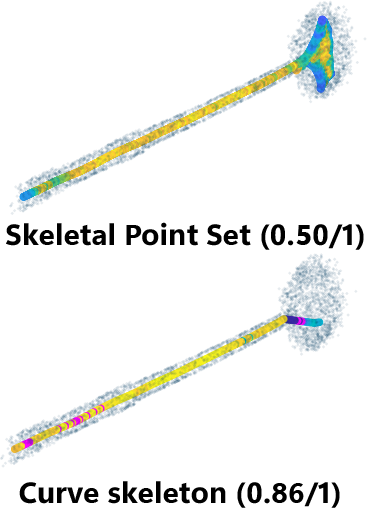}
    & \includegraphics[width=0.13\textwidth]{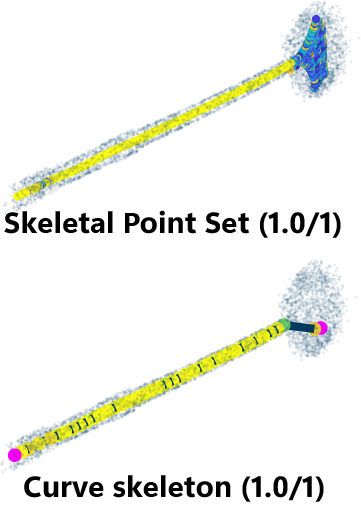} 
    \\ \cline{1-6}
    1024 pts
    & \includegraphics[width=0.13\textwidth]{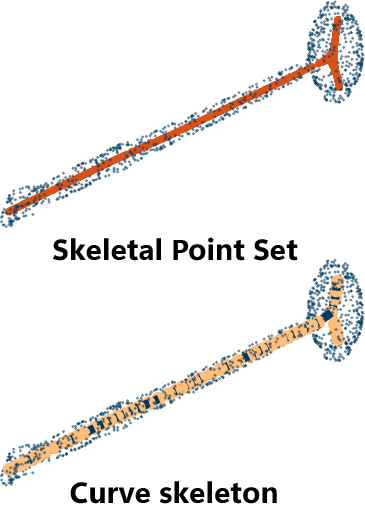}
    & \includegraphics[width=0.13\textwidth]{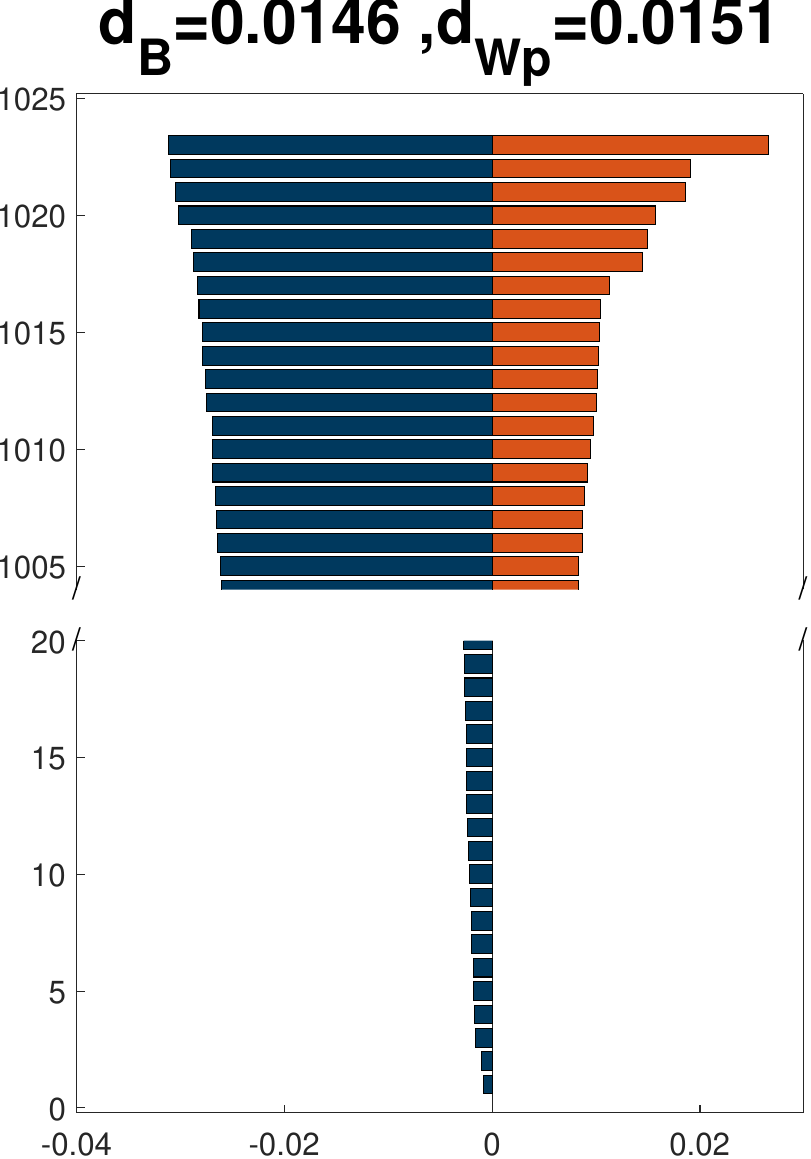}
    & \includegraphics[width=0.13\textwidth]{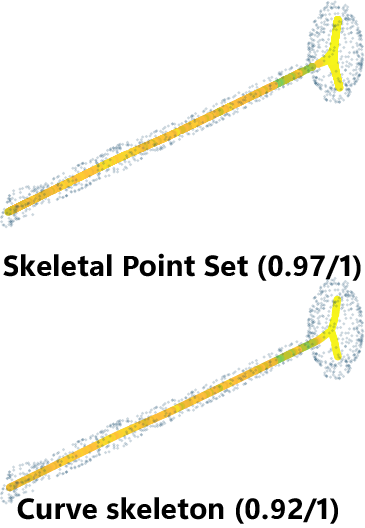}
    & \includegraphics[width=0.13\textwidth]{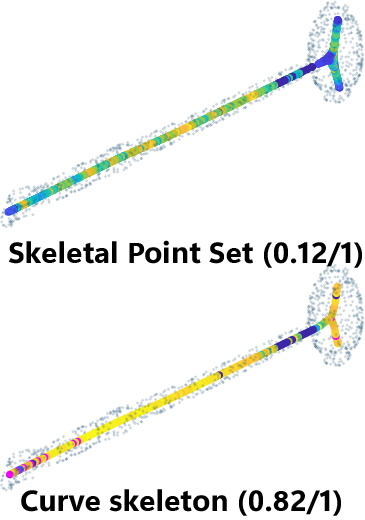}
    & \includegraphics[width=0.13\textwidth]{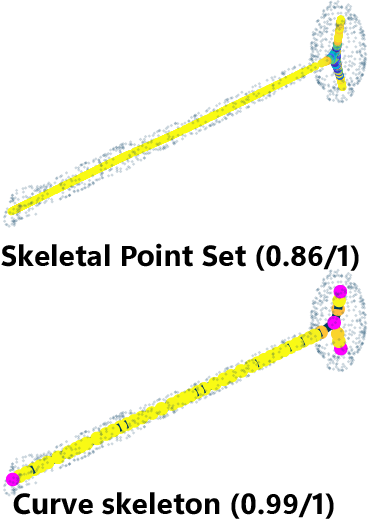}
    \\  \hline
    4096 pts
    & \includegraphics[width=0.12\textwidth]{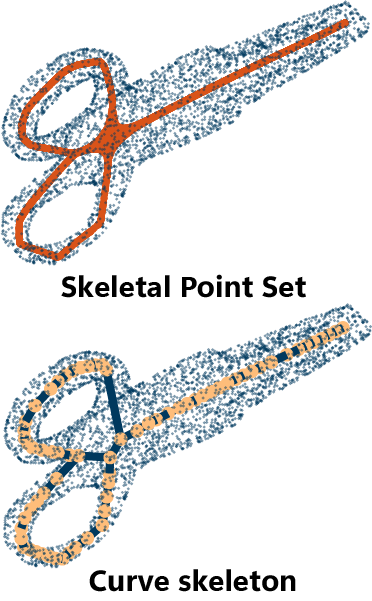}
    & \includegraphics[width=0.13\textwidth]{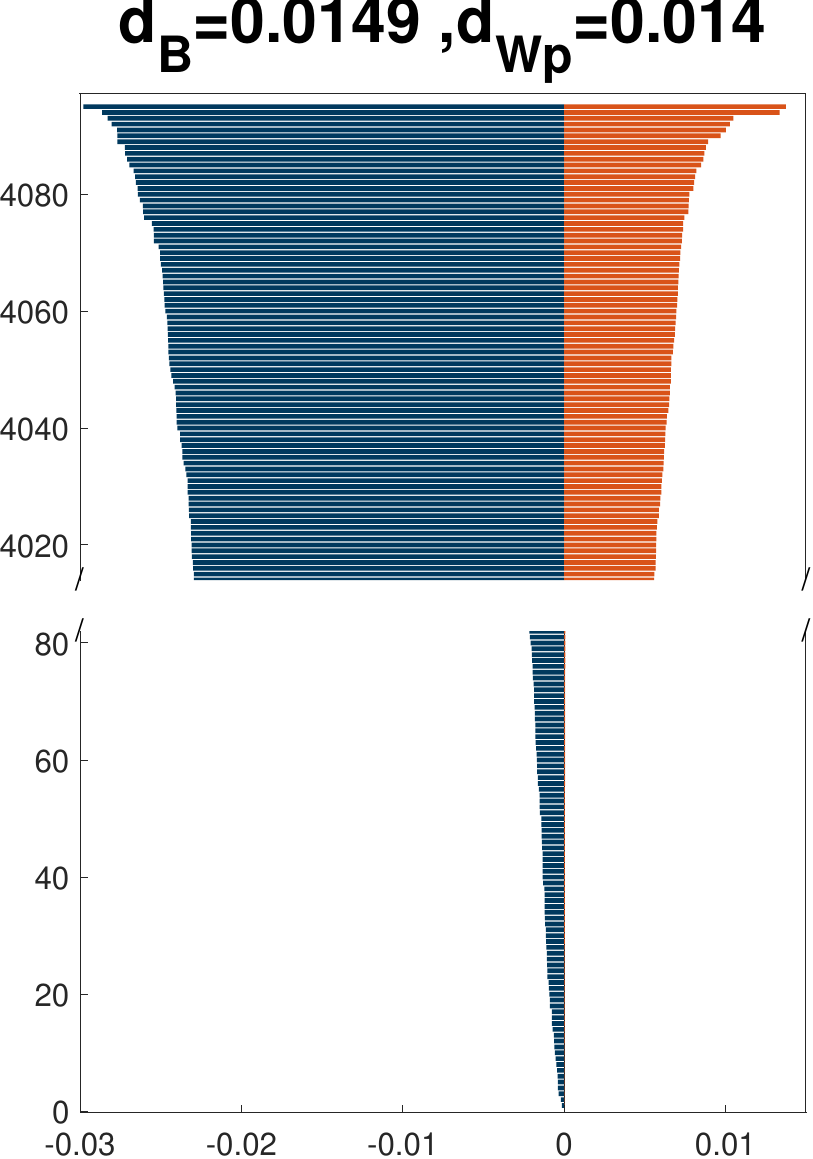}
    & \includegraphics[width=0.12\textwidth]{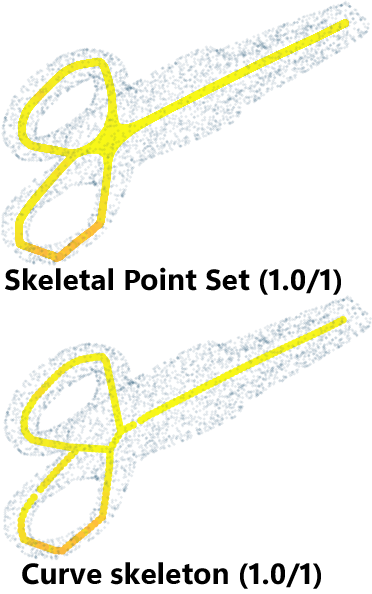} 
    & \includegraphics[width=0.12\textwidth]{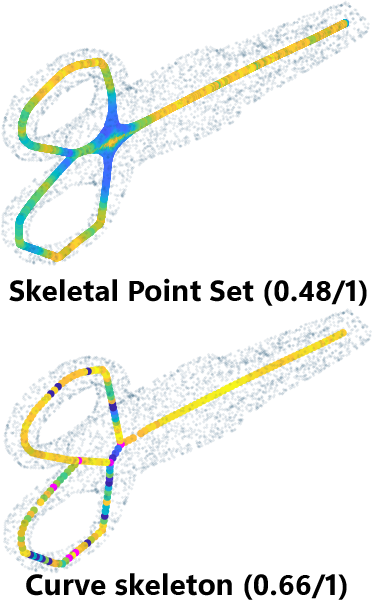}
    & \includegraphics[width=0.12\textwidth]{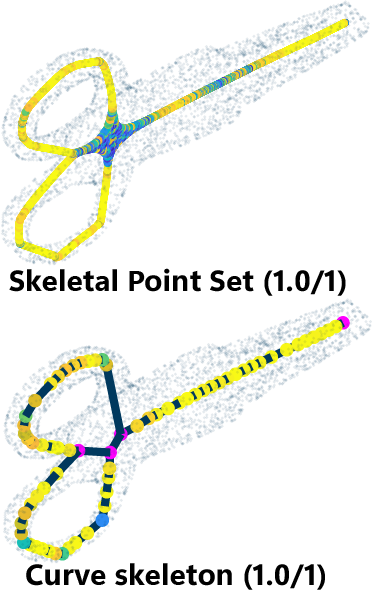} 
    \\ \hline
    4096 pts
    & \includegraphics[width=0.09\textwidth]{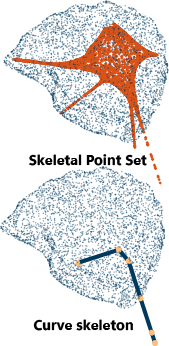}
    & \includegraphics[width=0.13\textwidth]{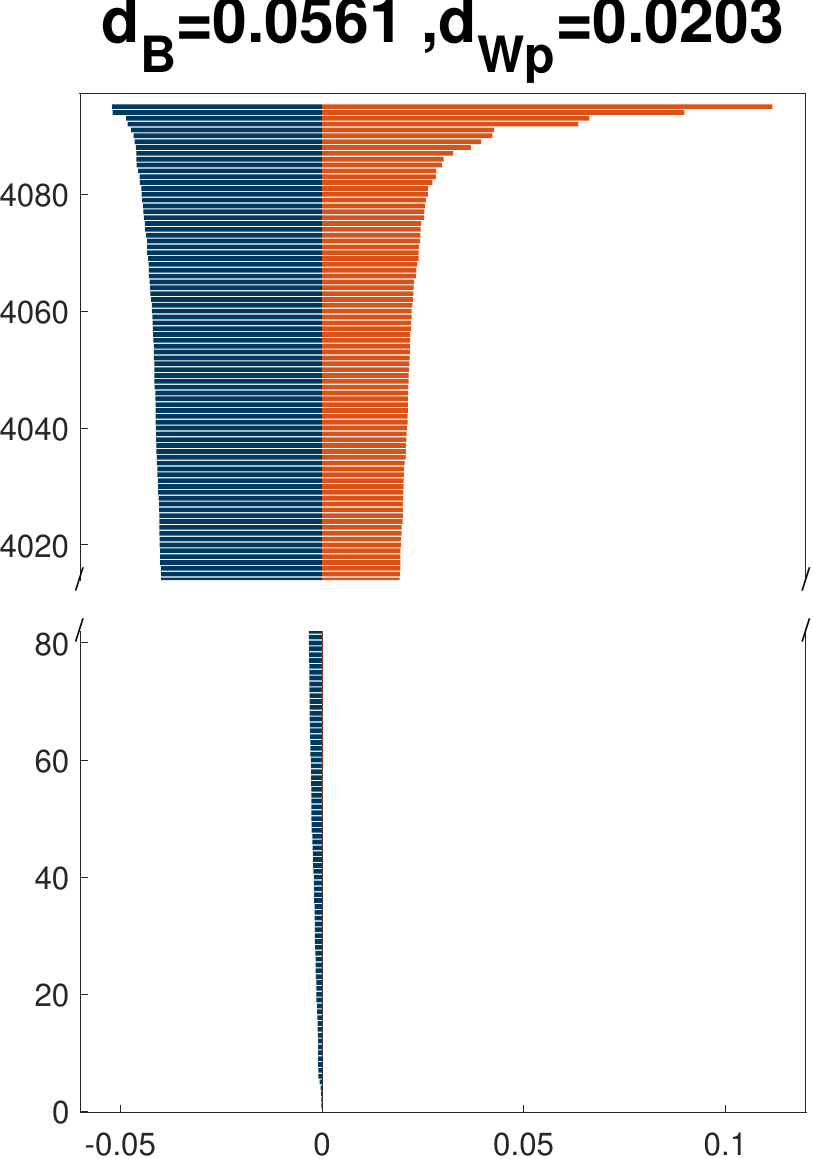}
    & \includegraphics[width=0.09\textwidth]{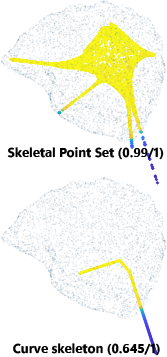}
    & \includegraphics[width=0.09\textwidth]{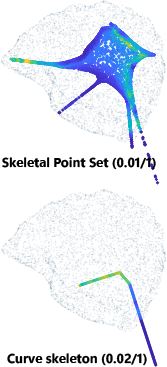}
    & \includegraphics[width=0.09\textwidth]{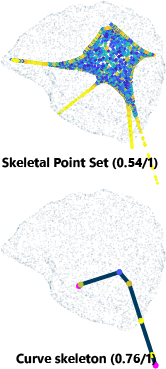} 
    \\ \hline
    4096 pts
    & \includegraphics[width=0.15\textwidth]{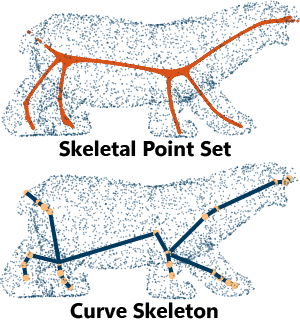}
    & \includegraphics[width=0.13\textwidth]{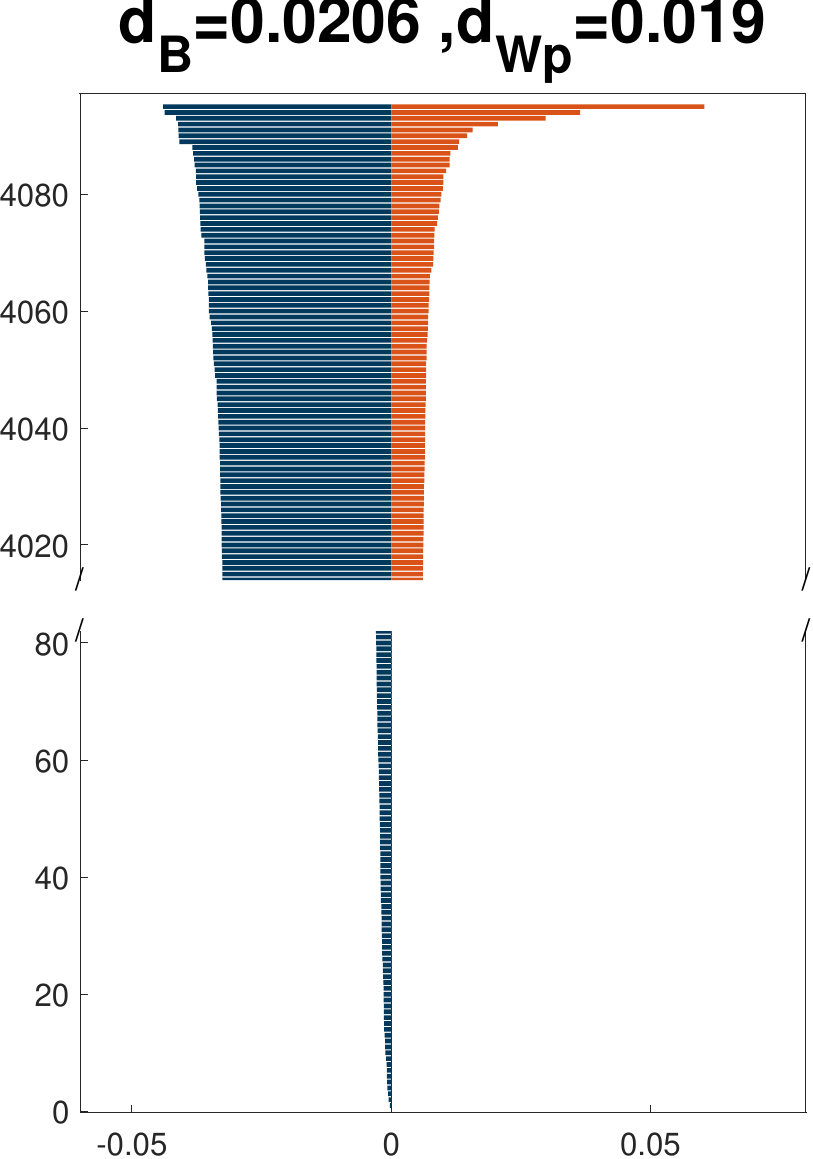}
    & \includegraphics[width=0.15\textwidth]{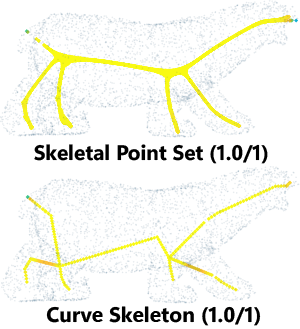}
    & \includegraphics[width=0.15\textwidth]{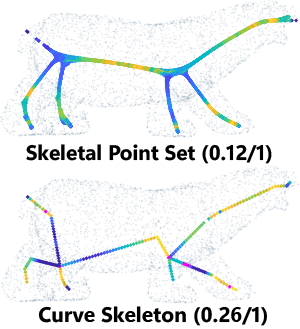}
    & \includegraphics[width=0.15\textwidth]{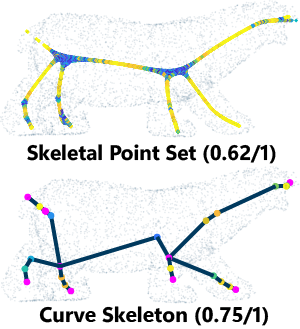} 
    \\  \hline\hline
    \multicolumn{6}{r}{\includegraphics[width=0.8\textwidth]{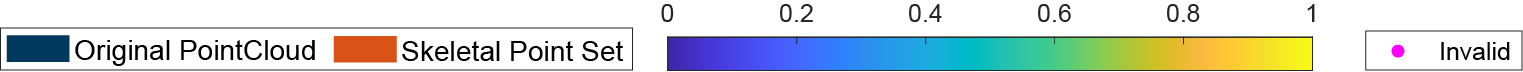}}
  \end{tabular}
  \caption{Evaluation of skeletonization results for normal, noise-induced, and sparse point cloud forms, assessed using the proposed metrics. The color indicates the local scoring along the skeleton.}
  \label{fig.graphic_table}
\end{figure*}
\begin{rem}
    The overall smoothness of the skeletal point set is defined as the average smoothness of its individual points. The overall smoothness of the curve skeleton is the edge-length weighted average smoothness of each vertex.
\end{rem}

As shown in Fig.~\ref{fig.skeleton_smoothness}, the smoothness for a point of the skeletal point set or vertices of the curve skeleton, given by Eq.~(\ref{eq.skeletal_point_smoothness}) and Eq.~(\ref{eq.curve_point_smoothness}), respectively, is able to show local variations in the tangent or curve direction. Those points/vertices where the direction changes sharply are assigned higher values of smoothness, while others are assigned smaller smoothness values. Although the overall smoothness of either the skeletal point set or curve skeleton (Definition~\ref{fig.skeleton_smoothness}) indicates the smoothness of the skeletal representations, the smoothness of the local points/vertices is of more importance. Local properties significantly influence motion planning strategies, such as a mobile robot following a trajectory~\cite{noel2023skeleton} or a grasping mechanism aligning with centralized local shape direction~\cite{vahrenkamp2018planning}. Poor local smoothness in curves can lead to unrealizable trajectories and introduce singularities, disrupting the continuity of planners or controllers.

\section{Results \& Discussion}
\label{resultsandiscussion}
This section aims to demonstrate a comprehensive evaluation and discussion on skeletonization results based on the proposed metrics in Section~\ref{EvaluationExperiments}, followed by a discussion of the desired properties in the application in robotics in Section~\ref{RoboticDiscussion}.

\subsection{Evaluation Experiments on Skeletonization Results}
\label{EvaluationExperiments}
We first evaluated skeletonization results obtained using the Laplacian-based contraction method~\cite{cao_point_2010}. Notably, our evaluation metric is adaptable to various skeletonization methods by simply importing the skeletal surface, represented by a point set or the curve, into the provided open-access toolbox. To generalize and expand the applicability umbrella of our proposed metrics, we introduced controlled degradations to the point cloud, such as adding noise, increasing sparsity, and perturbing normals. These variations in input are designed to assess the sensitivity of our evaluation method and its ability to distinguish differences in skeletonization quality, with degraded inputs expected to result in poorer skeletonization outcomes. The experiments were implemented in MATLAB, and all computations were performed on a machine with an i5-13500H CPU and 16GB of memory. The threshold values $c^*$, $\mathfrak{c}^*$ (Definition~\ref{def.overall_centeredness}), and $\beta^*$ (Proposition~\ref{prop.bounding_judgement}) were all set to 0.75. Additionally, the spatial size of the point cloud data was normalized, ensuring that the diagonal length of the bounding box $\epsilon_{\text{max}}$ introduced in Proposition~\ref{prop.topological_smilarity} was 1.6. Since centeredness and smoothness metric under our computation methods might not be available, the values for incomputable points are invalid and marked in magenta in Fig.~\ref{fig.graphic_table}.

We conducted our study on various inanimate objects and animal shapes using point cloud data collected from multiple datasets \cite{Wu_2023_CVPR_OmniObject3D,dobbs2024quantifying,PlanetaryPitsCavesDataset}. Figure~\ref{fig.graphic_table} and Table~\ref{tab.numeric_results} present selected examples, including both well-formed skeletons and those with structural issues. We analyze these cases, highlighting the differences and demonstrating how our metrics enable their detection. At first, the topology preservation of the skeletal shape is quantified using the topological distance score, derived from the barcode of persistent homology patterns. As shown in Fig.~\ref{fig.graphic_table}, poorer topological alignment results in higher topological distance scores, measured by bottleneck distance~(\ref{eq.bottleneck_dis}) and Wasserstein distance~(\ref{eq.wasserstein_dis}). The distance values in current settings are capped by the shape bounding box diagonal, 1.6. A lower distance value means higher topological similarity. The bottleneck distance captures significant topological changes, while the Wasserstein distance reflects the average change. The skeletal point set results of 4096 points input are considered good topology preservation if both distance values are below 0.02 ($d^*<0.02$ as introduced in Proposition~\ref{prop.topological_smilarity}). For example, the topology is well-preserved by the skeletal point set results for hammer input with 4096 points (first row of Fig.~\ref{fig.graphic_table}). Adding $5\%$ Gaussian noise to the hammer point cloud with 4096 points or using a sparser input with 1024 points approximately doubled both topological distances between the resulting skeletal point set and the input point cloud (Fig.~\ref{fig.graphic_table}, row 1-3). In the skeletal point set of the animal bear toy (last row), greater structural changes under the same input settings resulted in more than twice the topological distances compared to the hammer. In the biscuit shape (fifth row), significant shape changes in edge regions contribute to a very large bottleneck distance (0.0561), while the Wasserstein distance is relatively lower (0.0203) as the majority planar structure of the skeletal point set is aligned with the input shape. Since topological similarity is based on $H_0$ features (connected components), neighboring relationships influence the scores. Consequently, the difference in topological distance scores between sparse and dense point clouds (third vs. first row) may be larger than expected. Table~\ref {tab.numeric_results} confirms that sparse and noisy inputs degrade topology preservation, leading to higher topological distance scores.

\begin{table*}[t!]
    \centering
    \begin{tabular}{  m{3.4cm} m{2.3cm} m{3.1cm} m{2.6cm} m{2.6cm}  }
    \hline
         \textbf{Shapes} &  $d_B (\ref{eq.bottleneck_dis})/d_{W_p} (\ref{eq.wasserstein_dis})\downarrow$ & $\mathcal{B}_{P_s \circ P_o} (\ref{eq.boundedness_skeletal_points})/\mathcal{B}_{G_s \circ P_o} (\ref{eq.boundedness_curve})\uparrow$& $\mathcal{C}_s(P_s) /\mathcal{C}_s(G_s) (\ref{eq.overal_centeredness})\uparrow$ &$S(P_s)(\ref{eq.smoothness_skeletal_points})/\mathfrak{S}_{G_s} (\ref{eq.smoothness_curve})\uparrow $\\ \hline\hline
         {Dumbbell (4096 pts)\cite{Wu_2023_CVPR_OmniObject3D}} & 0.0218/0.0179  & 1.0/1.0       & 0.146/0.803    & 0.592/0.766 \\
         {Dumbbell (5\% noise)}& 0.0347/0.0239  & 0.994/1.0       & 0.060/0.295 & 0.559/0.914\\
         {Dumbbell (1024 pts)} & 0.0418/0.0379  & 0.967/0.939  & 0.021/0.364 & 0.601/0.974\\ \hline
         
         {Steamed Bun (4096 pts)} & 0.0309/0.0290  & 0.993/0.910       & 0.001/0.000    & 0.464/0.953 \\
         {Steamed Bun (5\% noise)}& 0.0439/0.0349  & 0.981/0.804       & 0.0/0.0 & 0.368/1.0\\
         {Steamed Bun (1024 pts)} & 0.0987/0.0340  & 0.979/0.983  & 0.008/0.023 & 0.347/0.854\\ \hline
         
       {Toy Plant (4096 pts)} & 0.0230/0.0214  & 0.993/0.965       & 0.277/0.358    & 0.583/0.755 \\
         {Toy Plant (5\% noise)}& 0.0350/0.0246  & 0.998/1.0       & 0.076/0.177 & 0.542/0.898\\
         {Toy Plant (1024 pts)} & 0.0535/0.0444  & 0.970/0.891  & 0.044/0.288 & 0.654/0.962\\ \hline
         
          {Skateboard (4096 pts)} & 0.0157/0.0144  & 1.0/1.0       & 0.223/0.636    & 0.734/0.918 \\
         {Skateboard (5\% noise)}& 0.0473/0.0173  & 0.999/0.995       & 0.185/0.509 & 0.687/0.953\\
         {Skateboard (1024 pts)} & 0.0247/0.0241  & 0.831/0.588  & 0.051/0.140 & 0.787/0.957\\ \hline

       {Banana (4096 pts)} & 0.0218/0.0154  & 1.0/1.0       & 0.788/0.969    & 0.704/0.828 \\
         {Banana (5\% noise)}& 0.1033/0.0188  & 0.992/0.855       & 0.157/0.136 & 0.557/0.934\\
         {Banana (1024 pts)} & 0.0321/0.0322  & 0.982/0.895  & 0.165/0.147 & 0.559/0.844\\ \hline

       {Knife (4096 pts)} & 0.0170/0.0112  & 0.986/0.980       & 0.255/0.629    & 0.705/0.880 \\
         {Knife (5\% noise)}& 0.0209/0.0125  & 0.968/0.980       & 0.248/0.311 & 0.680/0.846\\
         {Knife (1024 pts)} & 0.0212/0.0219  & 0.851/0.813  & 0.124/0.337 & 0.716/0.940\\ \hline

        {Synthetic tree (8936 pts)~\cite{dobbs2024quantifying}} & 0.0046/0.0048  & 0.830/0.721       & 0.284/0.102    & 0.904/0.922 \\
         {Synthetic tree (5\% noise)}& 0.0272/0.0158  & 0.996/1.0       & 0.233/0.163 & 0.628/0.867\\
         {Synthetic tree (1730 pts)} & 0.0077/0.0060  & 0.653/0.503  & 0.124/0.045 & 0.921/0.946\\ \hline
         
         {Cave (9769 pts)~\cite{PlanetaryPitsCavesDataset}} & 0.0297/0.0136  & 1.0/1.0       & 0.029/0.147    & 0.567/0.937 \\
         {Cave (5\% noise)}& 0.0745/0.0183  & 0.986/0.905       & 0.010/0.031 & 0.415/0.934\\
         {Cave (1730 pts)} & 0.261/0.0287  & 0.993/0.920  & 0.035/0.390 & 0.660/0.890
         \\ \hline\hline
    \end{tabular}
    \caption{Quantitative results of skeletonization evaluation. The input cave point cloud is sectioned and capped from the original data to ensure a closed shape for skeletonization. Different resolutions of cave and synthetic tree point clouds are acquired by grid-averaged sampling. }
    \label{tab.numeric_results}
\end{table*}

As discussed in the previous section, local performance is the most meaningful aspect of skeletonization evaluation. Fig.~\ref{fig.graphic_table} also visualizes the quality of local skeleton components using color intensity representation based on multiple metrics. In the hammer and scissor models, all skeletal components remain well-bounded, whereas, in the biscuit and animal bear models, some skeletal components extend beyond the surface boundary, indicated by the darker colors. Regarding sensitivity, boundedness responds to skeleton disturbances or deformations caused by sparse point clouds and reflects these changes effectively.  For centeredness, the hammerhead becomes less centered as the input point cloud becomes sparser, as shown by the cooler colors. Besides, as the noises are added to the input point cloud, the point cloud is not contracted properly, and some points in the resultant skeletal point set are less centered relative to the noisy input. The proposed metric also captures variations in centeredness across different regions, as demonstrated in the scissor model. The visual presentation shows that Laplacian-based skeletonization is highly sensitive to point cloud size, with sparse data being more damaging than noise, as noted in previous studies. However, the metrics consistently reflect these changes, confirming their sensitivity and aiding in quality assessment and convergence stability. According to the results,  one limitation of the centeredness metric is the sensitivity difference between the centeredness evaluation of the skeletal point set and the curve skeleton due to the centeredness approximation difference among those two quite different shape representations. The centeredness calculation of skeletal points relies on all neighboring points, while the centeredness of the curve skeleton is calculated based on only radial neighbor points. As shown in Fig.~\ref{fig.graphic_table}, such as the third row of the figure, the centeredness value of the skeletal point set is with more variation among skeletal components and tends to be lower than that of curve skeleton, especially around the ending point of the skeleton. There are two types of points with invalid values of centeredness. One is when the curve points coincident with the curve vertices, for which the centeredness is ambiguous. The other type is related to one limitation of our method, which is unable to give a valid centeredness value for where the points from the input of the corresponding skeletal components are very sparse, leading to ambiguous shape surface representation and the corresponding locus.

Similarly, for smoothness, the metric effectively captures local curvature change rates. Skeleton components with abrupt directional changes, such as the legs of the animal bear (last row), exhibit lower smoothness values, while smoother regions, like the body, have higher values. As shown in Table~\ref{tab.numeric_results} and Fig.~\ref{fig.graphic_table}, sparse and noisy input data generally lead to lower boundedness and centeredness scores, though minor variations exist. However, smoothness is more dependent on the intrinsic skeleton structure rather than input data quality, resulting in weaker correlations with data variations, as observed in both figures. The smooth metric does not give the value for curve skeleton endpoint vertices and the joint vertices, which is meaningless.

Regarding the overall score of those metrics, the skeletonization results, which perform well with respect to all those four metrics are recognized as good skeletons, while good skeletons might have detailed requirements in specific definitions. While our methods distinguish the skeletonization performance in shape representation, limitation exists, including the sensitivity in point cloud density for topological similarity, boundedness, and unavailability for local centeredness evaluation, where the points of input shape are extremely sparse, leading to ambiguous object surface representation and the corresponding locus of the shape.
 
\subsection{Desired Properties in Robotic Applications}
\label{RoboticDiscussion}
While the skeleton of a shape is always expected to be well-bounded, the significance of other skeletonization metrics may vary depending on different robotic application scenarios. Topological similarity is particularly crucial when using skeleton information for robotic manipulation~\cite{varava2016caging,vahrenkamp2018planning}. As shown in Fig.~\ref{fig.graphic_table}, the skeletonization results of objects such as the hammer and scissors visually follow the original topology and exhibit a small topological distance according to persistent homology patterns. Thus, these skeletonization results are of higher quality for robotic manipulation analysis. Notably, objects whose skeletonization results well preserve the topological structure using the Laplacian-based method are more likely to be cylindrical in shape.  

Centeredness also plays a key role in grasping planning when proposing local grasping candidates~\cite{vahrenkamp2018planning}. Taking the hammer as an example, the first row of Table~\ref{fig.graphic_table} shows that the skeleton’s centeredness is higher in the handle component and part of the hammerhead, where grasping confidence is higher. However, the joint and endpoints of the hammer skeleton exhibit lower centeredness, making them ambiguous for grasping analysis. A well-centered skeleton component is more likely to produce accurate grasping candidates.  

In agricultural manipulation, semantic structure, including both semantic labels and topological information, is of primary importance~\cite{you2022semantics}. According to the evaluation results for the synthetic tree shown in Table~\ref{tab.numeric_results}, noise significantly affects topology preservation, which is critical given that noise is common in agricultural sensing scenarios. This is expected, as tree branches are typically thin, making their topological structures more susceptible to damage from noise.  

For surface reconstruction, the work of Wu et al.~\cite{wu2020skeleton} may rely on skeletonization performance in terms of topological similarity to accurately traverse the corresponding point cloud data. In contrast, for navigation, centeredness is less critical, whereas smoothness and topological similarity are of greater importance~\cite{cornea2005applications}. As illustrated in Table~\ref{tab.numeric_results}, the resolution of the input point cloud significantly impacts topology preservation in skeletonization. Higher-resolution point clouds increase the computational load, whereas very low-resolution input degrades skeletonization performance, which makes these proposed metrics valuable for checking the quality of the achieved skeleton or median curve.

Smoothness is also essential for navigation, as the number of vertices in the curve skeleton affects the consistency of the navigation path, necessitating curve-smoothing operations for effective path planning. For real-time applications, including navigation and robotic manipulation, computation time is another critical factor affecting planning confidence. In our experiments, the average evaluation time for skeletonization of a point cloud shape with 4096 points was 34 seconds. The most computationally intensive parts were topological similarity and boundedness evaluation, accounting for approximately 76\% and 23\% of the computation time, respectively.

To summarize, the proposed evaluation metric can measure the performance of both the skeletal point set and curve skeleton from multiple perspectives. However, the sensitivity to point density changes of input point cloud of the proposed boundedness and topological similarity metrics, and the limitations for centeredness evaluation are also shown in the results. Besides, the importance of those metrics varies in different application scenarios which means the performance of skeletonization methods in different scenarios might be different. Overall, this metric will open a new area of research on understanding the quality of skeletons for better performance in robotic applications from grasping to navigation.




\section{Conclusion}
In this work, we propose a novel formal geometric evaluation metric for skeletonization assessment, drawing inspiration from previous studies on certain metrics. Moving beyond traditional subjective visual judgment, we introduce numerical and representative tools to assess skeletonization performance from a geometric perspective. Our evaluation, conducted on real-scanned point cloud skeletonization across different resolutions and noise levels, demonstrates that the proposed method effectively captures skeletonization performance in various aspects, including topological structure, boundedness, centeredness, and smoothness. Additionally, we assess skeletonization for point clouds from different robotic application scenarios and discuss the relative importance of multiple metrics.  

This study represents the first numerical skeletonization evaluation with a discussion of application scenarios. However, limitations remain, including sensitivity to point cloud resolution. Regarding different robotic application scenarios, we provide a general evaluation framework. In the future, we plan to improve the computation time of the metrics by focusing on a specific robotic manipulation application, such as the rolling contact problem.


%



\ifCLASSOPTIONcompsoc
  \section*{Acknowledgments}
\else
  \section*{Acknowledgment}
\fi
This work was partially supported by the Royal Society research grant under Grant \text{RGS\textbackslash R2\textbackslash 242234}. Also, this work was partially supported by China Scholarship Council via a stipend (No. 202006760092).

\ifCLASSOPTIONcaptionsoff
  \newpage
\fi



%
\bibliographystyle{IEEEtran}
\bibliography{skeleton_ref} 

%

\begin{IEEEbiography}[{\includegraphics[width=1in,height=1.25in,clip,keepaspectratio]{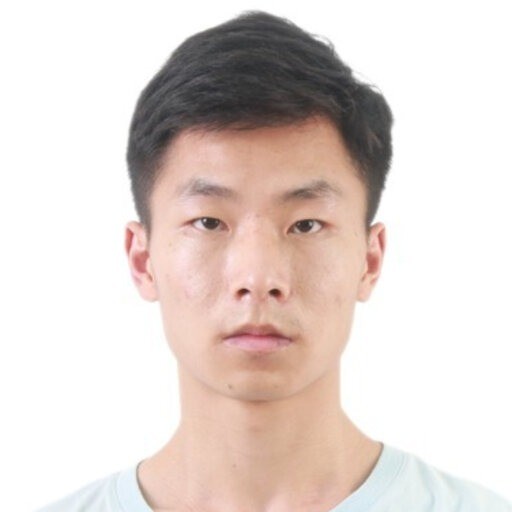}}]{Qingmeng Wen} (Student Member, IEEE) received the B.S. degree in automation engineering from Huazhong Agricultural University, Wuhan, China, in 2021. He is currently pursuing a Ph.D. degree with Cardiff University, U.K., His research interests include robotics, geometry processing, and computer vision.
\end{IEEEbiography}

\begin{IEEEbiography}
[{\includegraphics[width=1in,height=1.25in,clip,keepaspectratio]{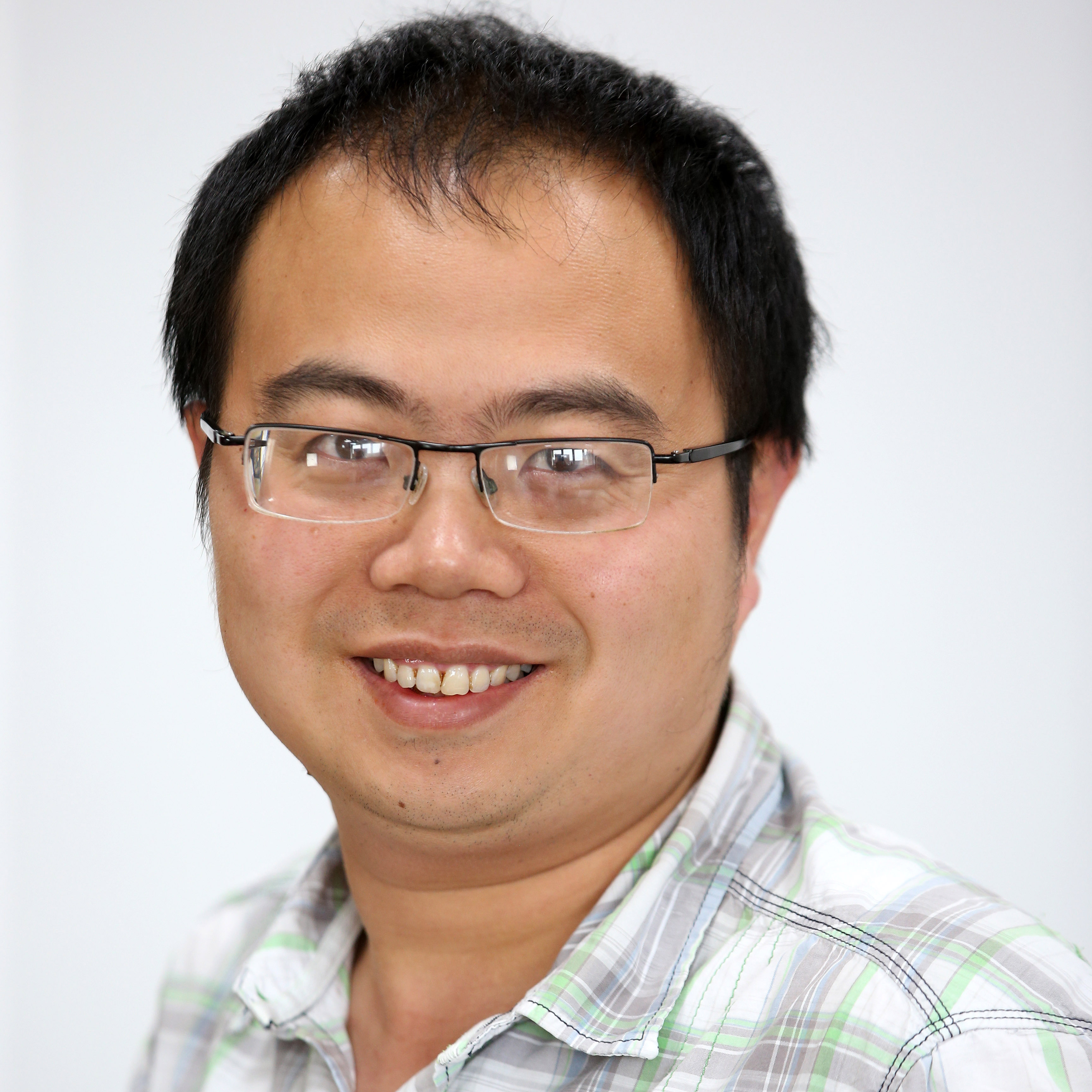}}]
{Yu-Kun Lai} (Senior Member, IEEE) received the bachelor’s and Ph.D. degrees in computer science from Tsinghua University in 2003 and 2008, respectively. He is currently a Professor with the School of Computer Science and Informatics, Cardiff University. His research interests include computer graphics, geometry processing, image processing, and computer vision. He is on the Editorial Boards of \emph{IEEE Transactions on Visualization and Computer Graphics} and \emph{The Visual Computer}.
\end{IEEEbiography}

\begin{IEEEbiography}[{\includegraphics[width=1in,height=1.25in,clip,keepaspectratio]{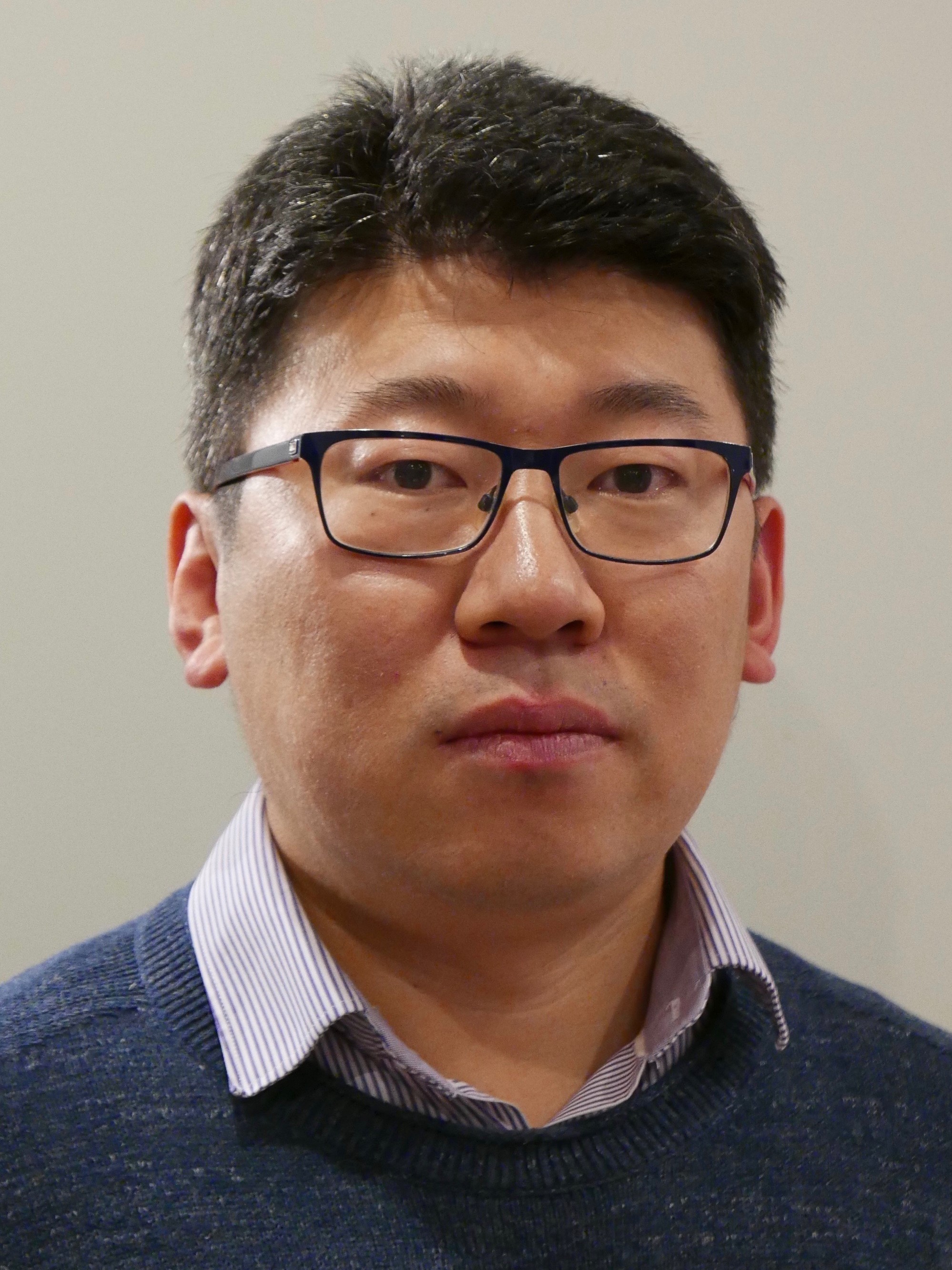}}]{Ze Ji} (Member, IEEE) received the B.Eng. degree from Jilin University, Changchun, China, in 2001, the M.Sc. degree from the University of Birmingham, Birmingham, U.K., in 2003, and the Ph.D. degree from Cardiff University, Cardiff, U.K., in 2007., He is a Reader with the School of Engineering, Cardiff University, U.K. Prior to his current position, he was working in industry (Dyson, Lenovo, etc) on autonomous robotics. His research interests are cross-disciplinary, including autonomous robot navigation, robot manipulation, robot learning, computer vision, simultaneous localization and mapping (SLAM), acoustic localization, and tactile sensing.
\end{IEEEbiography}

\begin{IEEEbiography}[{\includegraphics[width=1in,height=1.25in,clip,keepaspectratio]{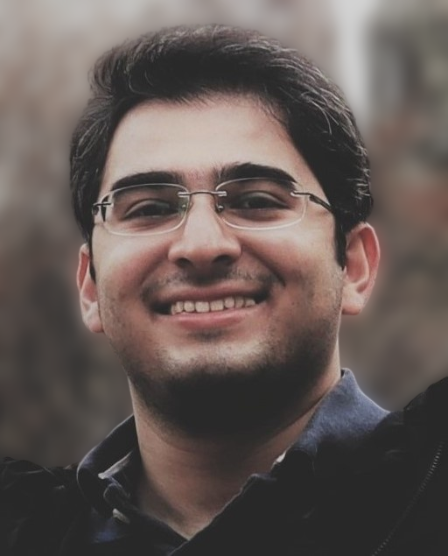}}]{Seyed Amir Tafrishi} (Member, IEEE) received his M.Sc. degree in control systems engineering from the University of Sheffield in 2014, UK, and Ph.D. degree in mechanical engineering from Kyushu University, Japan in 2021. 
	
Dr. Tafrishi is currently a Lecturer at Engineering School, Cardiff University, UK. He is the director of the Geometric Mechanics and Mechatronics in Robotics (gm$^2$R) lab. He was a Specially Appointed Assistant Professor working on Mooonshot R \& D project JST at Tohoku University, Japan, between 2021-2022. Since 2014, he has been a visiting researcher at  University of Sheffield, the Mechatronics Lab at METU, Turkey, and the Fluid Mechanics Lab at the University of Tabriz, Iran. His research interests include robotics, mechanism design, geometric mechanics, under-actuated systems.
\end{IEEEbiography}
\vfill






\end{document}